\newtheorem{lemma}{Lemma}
\newtheorem{definition}{Definition}[section]
\newtheorem{example}{Example}[section]
\begin{document}

\title{A New Methodology for Search Space Reduction in QoS Aware Semantic Web Service Composition}
\author{Soumi~Chattopadhyay,~\IEEEmembership{Member,~IEEE,} and
        Ansuman~Banerjee,~\IEEEmembership{Member,~IEEE}

\IEEEcompsocitemizethanks{\IEEEcompsocthanksitem Soumi Chattopadhyay is now in Indian Institute of Information Technology, Guwahati $\quad$ E-mail: $soumi@iiitg.ac.in$ and Ansuman Banerjee is in Indian Statistical Institute, Kolkata, India. $\quad$ E-mail: $ansuman@isical.ac.in$.

This  work  has  been  submitted  to  the  IEEE  for  possible  publication. Copyright may be transferred without notice, after which this version mayno longer be accessible
}}

\makeatletter
\long\def\@IEEEtitleabstractindextextbox#1{\parbox{0.922\textwidth}{#1}}
\makeatother

\IEEEtitleabstractindextext{
\begin{abstract}
The semantic information regulates the expressiveness of a web service. State-of-the-art approaches in web services research have used the semantics of a web service for different purposes, mainly for service discovery, composition, execution etc. In this paper, our main focus is on semantic driven Quality of Service (QoS) aware service composition. Most of the contemporary approaches on service composition have used the semantic information to combine the services appropriately to generate the composition solution. However, in this paper, our intention is to use the semantic information to expedite the service composition algorithm. Here, we present a service composition framework that uses semantic information of a web service to generate different clusters, where the services are semantically related  within a cluster. Our final aim is to construct a composition solution using these clusters that can efficiently scale to large service spaces, while ensuring solution quality. Experimental results show the efficiency of our proposed method.
\end{abstract}
\begin{IEEEkeywords}
Semantic Service Composition, Quality of Service (QoS), Abstraction, Refinement
\end{IEEEkeywords}}

\maketitle

\IEEEdisplaynontitleabstractindextext
\IEEEpeerreviewmaketitle
\ifCLASSOPTIONcaptionsoff
  \newpage
\fi

\section{Introduction}\label{sec:intro}
\noindent
With proliferation of network technologies, services computing has attracted a significant research attention. Web services have become a useful tool to facilitate many necessary activities throughout the world. A web service performs a specific task to satisfy a user requirement. The performance of a web service is measured by a set of QoS parameters like response time, throughput, reliability, availability etc. Sometimes, a single web service is sufficient to meet a user requirement, sometimes, multiple web services are required to do so. In the latter case, web service composition \cite{7226855} is required. Though optimality \cite{schuller2012cost,rodriguez2015hybrid} is a primary concern in composition, it comes at the cost of more computation time, which is another key issue in recent times. With increase in the scale of web services \cite{7027801}, scalable service composition \cite{chattopadhyay2015scalable,lecue2009towards} is becoming a challenge. 

\begin{table*}
\centering
\caption{DataSet $D$: Service Repository}
\label{tab:ourData}
 \small
\begin{tabular}{l|l|l}
\hline\hline
 & \multicolumn{1}{c|}{\bf 1} & \multicolumn{1}{c}{\bf 2}\\\hline\hline
 {\em\bf Method}: & BWImageToBarcode & BinImageToBarcode\\ \hline
 {\em\bf Service Name}: & ${\mathcal{S}}_1$, ${\mathcal{S}}_2$ & ${\mathcal{S}}_3$, ${\mathcal{S}}_4$\\\hline
 {\em\bf Input}: & BWImage & BinImage\\ \hline
 {\em\bf Precondition}: &BWImage must be in jpeg format & BinImage must be in jpeg format\\
                        &BWImage size must be less than $50KB$ & BinImage size must be less than $50KB$\\ \hline
 {\em\bf Output}: & Barcode &  Barcode\\ \hline
 {\em\bf Postcondition}:& - & -\\
 
 \hline\hline
 & \multicolumn{1}{c|}{\bf 3} & \multicolumn{1}{c}{\bf 4}\\\hline\hline
 {\em\bf Method}:& ImageToBarcode & AnyImageToBarcode\\ \hline
 {\em\bf Service Name}: & ${\mathcal{S}}_5$, ${\mathcal{S}}_6$& ${\mathcal{S}}_7$, ${\mathcal{S}}_8$, ${\mathcal{S}}_9$ \\\hline
 {\em\bf Input}: & Image&Image \\ \hline
 {\em\bf Precondition}: & Image must be in jpeg format& Image size must be less than $50KB$ \\
                        & Image size must be less than $50KB$&\\ \hline
 {\em\bf Output}: & Barcode& Barcode \\ \hline
 {\em\bf Postcondition}:& -& - \\

 \hline\hline
 &\multicolumn{1}{c|}{\bf 5} & \multicolumn{1}{c}{\bf 6}\\\hline\hline
 {\em\bf Method}: & ImageToIDConverter & EuropeanBarcodeToProductID \\ \hline
 {\em\bf Service Name}: & ${\mathcal{S}}_{10}$& ${\mathcal{S}}_{11}$, ${\mathcal{S}}_{12}$\\
\hline
 {\em\bf Input}: & Image& EANBC  \\ \hline
 {\em\bf Precondition}: & -& - \\ \hline
 {\em\bf Output}: & ProductID &  ProductNumber  \\ \hline
 {\em\bf Postcondition}:& - & -  \\
 
 \hline\hline
 & \multicolumn{1}{c|}{\bf 7} & \multicolumn{1}{c}{\bf 8}\\\hline\hline
 {\em\bf Method}:& BarcodeToProductID & BarcodeToEncoded9ProductID\\\hline
 {\em\bf Service Name}: & ${\mathcal{S}}_{13}$ & ${\mathcal{S}}_{14}$, ${\mathcal{S}}_{15}$, ${\mathcal{S}}_{16}$\\\hline
 {\em\bf Input}: & Barcode & Barcode\\\hline
 {\em\bf Precondition}: & -& - \\\hline
 {\em\bf Output}: & PID & $PID_9$ \\\hline
 {\em\bf Postcondition}:& -& - \\

 \hline\hline
 & \multicolumn{1}{c|}{\bf 9} & \multicolumn{1}{c}{\bf 10}\\\hline\hline
 {\em\bf Method}:& BarcodeToEncoded14ProductID &  GetReviewInEnglish \\\hline
 {\em\bf Service Name}: & ${\mathcal{S}}_{17}$ &  ${\mathcal{S}}_{18}$, ${\mathcal{S}}_{19}$, ${\mathcal{S}}_{20}$ \\\hline
 {\em\bf Input}: & Barcode &   ProductID \\\hline
 {\em\bf Precondition}: & - &  - \\\hline
 {\em\bf Output}: & $PID_{14}$ &   Review \\\hline
 {\em\bf Postcondition}: & - & Review written in English \\ 

 \hline\hline
  & \multicolumn{1}{c|}{\bf 11}& \multicolumn{1}{c}{\bf 12} \\\hline\hline
 {\em\bf Method}:  & GetReview & GetReviewAndRatingInEnglish \\ \hline
 {\em\bf Service Name}: & ${\mathcal{S}}_{21}$, ${\mathcal{S}}_{22}$, ${\mathcal{S}}_{23}$ & ${\mathcal{S}}_{24}$, ${\mathcal{S}}_{25}$ \\\hline
 {\em\bf Input}:  & ProductID & $PID_9$\\ \hline
 {\em\bf Precondition}:  & - & $PID_9$ in alpha-numeric value\\ \hline
 {\em\bf Output}: & Review & Review, Rating\\ \hline
 {\em\bf Postcondition}:& Review written in local language & Review written in English\\

 \hline\hline
  & \multicolumn{1}{c|}{\bf 13}& \multicolumn{1}{c}{\bf 14}\\\hline\hline
 {\em\bf Method}:  & GetReviewAndRating& GetCheapShopAtPos\\ \hline
 {\em\bf Service Name}: & ${\mathcal{S}}_{26}$& ${\mathcal{S}}_{27}$, ${\mathcal{S}}_{28}$, ${\mathcal{S}}_{29}$, ${\mathcal{S}}_{30}$\\\hline
 {\em\bf Input}:  & $PID_9$&  ProductID, GPS\\ \hline
 {\em\bf Precondition}:  & $PID_9$ in alpha-numeric value& -\\ \hline
 {\em\bf Output}:  & Review, Rating&   Price, GPS\\ \hline
 {\em\bf Postcondition}: & Review written in local language & GPS must be in the radius of 100 meter\\
 && Cheapest price in 100 meter radius\\
 
 \hline\hline
  & \multicolumn{1}{c}{\bf 15}\\\hline\hline
 {\em\bf Method}:  & GetShopAtPos\\ \hline
 {\em\bf Service Name}: & ${\mathcal{S}}_{31}$, ${\mathcal{S}}_{32}$\\
\hline
 {\em\bf Input}:  & PID, GPS\\ \hline
 {\em\bf Precondition}:  & -\\ \hline
 {\em\bf Output}:  & Price, GPS\\ \hline
 {\em\bf Postcondition}: & GPS of the shop is the nearest one\\
 
 \hline\hline
  \multicolumn{3}{c}{\bf Ontology}\\\hline\hline
  \multicolumn{3}{c}{}\\
  \multicolumn{3}{c}{Concept: PID, BinImage; $f(ProductID) = f(PID) = PID;$ $PID_9$ $f(BWImage) = BinaryImage;$}\\ 
  \multicolumn{3}{c}{$f(BinImage) = BinaryImage;$ $\text{EANBC} \sqsubset \text{Barcode};\text{BinaryImage} \sqsubset \text{Image};$
 $PID_{9} \sqsubset PID;$ $PID_{14} \sqsubset PID$}\\
\hline
\end{tabular}
\end{table*}

A significant body of research focuses on finding optimal QoS aware composition solutions \cite{papadias2003optimal,6914562}. However, these methods often fail to scale for large service repositories in real time. Therefore, a number of heuristic methods \cite{6009375,1357986,DBLPChattopadhyayB16} have been proposed in literature. These methods also have their own limitations. Many of these fail to deliver the solution with desired quality, since they do not navigate the entire search space. The trade-off between optimality and efficiency has been explored extensively in web service literature \cite{6009422,alrifai2009combining,5552793}.
In this work, we handle the trade-off between optimality and efficiency from a different perspective.
In a realistic setting, a number of services are often deployed to fulfill the same purpose. Though these services are intended to serve similar functionality, they may have syntactically different interfaces. Sometimes, identifying two semantically related services becomes extremely important to compute the composition solution efficiently. The aim of this work is to identify semantically related services obtained from an ontology \cite{wagner2011qos} stored in a service repository along with the web service set. Using this information, we further aim to form service groups essentially required for generating efficient composition solution. The crux of our proposal is to reduce the search space of the underlying composition algorithm by clustering the semantically related services and thus, expedite its solution computation time. Once we cluster the services into multiple groups, each service group is then represented by a new service. Each new service is then assigned a set of QoS values that are the best representative in its respective group. The composition algorithm is first carried out in the new service space to find a solution satisfying all QoS constraints. Since the algorithm explores a smaller search space, it becomes more efficient than the original one. However, some information regarding service QoS parameters may be lost due to this clustering. Therefore, the composition algorithm, performed with the new representative service, may fail in producing a solution to a query satisfying all the constraints. In this paper, this clustering is termed as an {\em{abstraction}}. To deal with constraint violation issue, we further propose to refine an abstraction, which ensures a solution satisfying all the necessary constraints if one exists originally. Our framework is, therefore, sound and complete. 


{\bf{Novelty of our work:}} Our method is built on the foundation of abstraction refinement, a standard technique for handling large state spaces, originally proposed in \cite{clarke2000counterexample} in the context of formal verification. Abstraction refinement based on syntactic similarity in terms of input-output parameters has been dealt with in our earlier work \cite{7933195}. However, the notion of semantics of a web service has not been dealt with in that work. In this paper, we bring the notion of semantics and propose a methodology that provides a scalable way of pruning the composition search space. Our method can be applied on top of any service composition algorithm to improve its performance. In \cite{wagner2011qos}, the authors proposed functional clustering based on semantics of the services. However, the focus of \cite{wagner2011qos} is on reducing the search space with the objective of improving the reliability and flexibility, while our focus is to reduce the search space and thus expedite solution composition time. Moreover, the clustering methods that we consider is different from the one in \cite{wagner2011qos}.

{\bf{Our Contribution:}} In this work, we have the following contributions:
\begin{itemize}
 \item We first propose different abstractions based on the semantics of the services in the service repository, that help to reduce the search space and thus to expedite solution computation time.
 \item In order to ensure that our algorithm always produce a valid solution (i.e., a solution satisfying all the constraints), we further propose two refinements corresponding to each solution.
 \item We have performed extensive experiments and it is evident from our experiments that on an average our framework is more efficient, when compared with the performance of the underlying composition algorithm without abstraction.
\end{itemize}

This paper is organized as follows. Section \ref{sec:preliminaries} presents some preliminary concepts. Section \ref{sec:method} demonstrates our proposed methodologies. Section~\ref{sec:result} shows the results obtained using our framework. Section~\ref{related} includes a discussion on related literature, while Section \ref{sec:conclusion} concludes our paper.

\section{Working Example}\label{sec:example}
\noindent
We illustrate our proposal on the following running example as shown in Table \ref{tab:ourData} containing 32 services ${\cal{S}}_{1}$ to ${\cal{S}}_{32}$. 
We consider the following query:
A user provides a binary image of the Barcode of of a product in jpeg format with size less than $50KB$ along with his current location to obtain the review of the product in English and the location coordinates GPS of the nearest shop and the product price. The query $\cal{Q}$ is formally written as:
 ${\cal{Q}} = (\{BinaryImage, GPS\}, \{Review, GPS, Price\}$, \{Image in jpeg format, Image size less than $50KB$\}, \{Review written in English\}, \{\}, \{\}).

\section{Overview and Problem Formulation}\label{sec:preliminaries}
\noindent
We illustrate our proposal on the following running example as shown in Table \ref{tab:ourData} containing 32 services ${\mathcal{S}}_{1}$ to ${\mathcal{S}}_{32}$. 
We consider the following query:
A user provides a binary image of the Barcode of of a product in jpeg format with size less than $50KB$ along with his current location to obtain the review of the product in English and the location coordinates GPS of the nearest shop and the product price. The query $\mathcal{Q}$ is formally written as:
 ${\mathcal{Q}} = (\{BinaryImage, GPS\}, \{Review, GPS, Price\}$, \{Image in jpeg format, Image size less than $50KB$\}, \{Review written in English\}, \{\}, \{\}).
We begin with defining a few terminologies.




\begin{definition}{\bf[Web Service]}:{\em~
  A web service ${\mathcal{S}}_i$ is a 6-tuple { $({\mathcal{S}}^{ip}_i, {\mathcal{S}}^{op}_i, {\mathcal{M}}_i, {\mathcal{P}}^i, {\mathcal{PR}}_i, {\mathcal{PO}}_i)$}, where, ${\mathcal{S}}^{ip}_i$ and ${\mathcal{S}}^{op}_i$ are the set of input and output parameters of ${\mathcal{S}}_i$ respectively. ${\mathcal{M}}_i$ is the method implemented by ${\mathcal{S}}_i$. { ${\mathcal{P}}^i = ({\mathcal{P}}^i_1, {\mathcal{P}}^i_2, \ldots, {\mathcal{P}}^i_m)$} are the values of the set of QoS parameters. ${\mathcal{PR}}_i$ represents a set of preconditions, defined over ${\mathcal{S}}^{ip}_i$ and specified in quantifier free first order logic \cite{lindstrom1966first}. ${\mathcal{PO}}_i$ represents a set of postconditions, defined over the set of output parameters of ${\mathcal{S}}_i$ and specified in quantifier free first order logic.
   \hfill$\blacksquare$
}\end{definition}

\noindent
A web service is executed, once all its inputs are available and the set of preconditions are satisfied. If a web service is executed, it produces the corresponding set of outputs, while ensuring the set of postconditions. The pre and post conditions are optional for a service. Therefore, the pre or post condition / both of them can be null for a service.

\begin{example}{\em~
Example services can be found in Table \ref{tab:ourData}. The table contains 32 services ${\mathcal{S}}_{1}$ to ${\mathcal{S}}_{32}$. For each service, a brief description of the method it implemented is provided in the table.
\hfill$\blacksquare$}
\end{example}

The QoS parameters of a web service are classified into two categories: (a) {\em Positive QoS}, for which a high value is desirable (e.g., reliability) (b) {\em Negative QoS}, for which a low value is desirable (e.g., response time). 

Two input / output parameters may have syntactically different names. However, both of them may be semantically related. An ontology, which is stored in the service repository along with the web services, maintains all such information. To take into account the semantics of the parameters, a list of concepts and a mapping from each input / output parameter to a concept are maintained in the ontology. 

\begin{definition}{\bf[Concept]}:{\em~
 Each input / output parameter corresponding to an ontology maps to a unique entity ${\mathcal{C}}_i$ of the ontology, called a concept. 
 \hfill$\blacksquare$}
\end{definition}

\begin{definition}{\bf[Parameter mapping]}:{\em~
 Parameter mapping $f$ is a mapping of elements in $IO$ to $C$, where ${\mathcal{IO}}$ and ${\mathcal{C}}$ are respectively the set of input-output parameters and the set of concepts corresponding to an ontology.
 \hfill$\blacksquare$}
\end{definition}

\begin{example}{\em~
Consider services ${\mathcal{S}}_1$ and ${\mathcal{S}}_3$ in Table \ref{tab:ourData}. ${\mathcal{S}}_1$ takes a BWImage as input, while ${\mathcal{S}}_3$ takes a BinImage as input. According to the ontology, both refer to the same concept binaryImage. Therefore, $f(binImage) = f(BWImage) = binaryImage$.}
 \hfill$\blacksquare$
\end{example}

\noindent
Two concepts ${\mathcal{C}}_1$ and ${\mathcal{C}}_2$ in an ontology are related in either of the following ways: (a) ${\mathcal{C}}_1 = {\mathcal{C}}_2$: ${\mathcal{C}}_1$ and ${\mathcal{C}}_2$ are identical concepts. (b) ${\mathcal{C}}_1 \sqsubset {\mathcal{C}}_2$: ${\mathcal{C}}_1$ is a sub concept of ${\mathcal{C}}_2$, while ${\mathcal{C}}_2$ is a super concept of ${\mathcal{C}}_1$. (c) ${\mathcal{C}}_1 \neq {\mathcal{C}}_2$: ${\mathcal{C}}_1$ and ${\mathcal{C}}_2$ are not related.

\begin{example}{\em~
 Consider 4 different concepts: binaryImage, grayImage, RGBImage, Image. The relationship mentioned in an ontology, is as follows: $binaryImage \sqsubset grayImage \sqsubset Image$; $grayImage \ne RGBImage$; $RGBImage \sqsubset Image$;
 \hfill$\blacksquare$}
\end{example}

\begin{definition}{\bf[Query]}:{\em~
 A query ${\mathcal{Q}}$ is a 6-tuple {$({\mathcal{Q}}^{ip}, {\mathcal{Q}}^{op}, {\mathcal{IS}}, {\mathcal{OR}}, {\mathcal{OP}}, {\mathcal{CP}})$}, where, {${\mathcal{Q}}^{ip}$} and {${\mathcal{Q}}^{op}$} are the set of given input and desired output parameters respectively. {${\mathcal{IS}}$} refers to the input specification, defined over {${\mathcal{Q}}^{ip}$} and specified in quantifier free first order logic. {${\mathcal{OR}}$} represents the output requirement, defined over {${\mathcal{Q}}^{op}$} and specified in quantifier free first order logic. {${\mathcal{CP}}$} represents a set of QoS constraints, typically, the bounds on the worst case value of the QoS parameters. {${\mathcal{OP}}$} refers to the objectives of the query (e.g., maximizing throughput, minimizing response time).
  \hfill$\blacksquare$}
\end{definition}

\begin{example}\label{example:query}{\em~
 Consider a query. A user provides a binary image of the Barcode of a product in jpeg format with size less than $50KB$ to obtain the review of the product in English. The query $\mathcal{Q}$ is formally written as:
 ${\mathcal{Q}} = (\{BinaryImage\}, \{Review\}$, \{Image in jpeg format, Image size less than $50KB$\}, \{Review written in English\}, \{Maximum response time is 500ms\}, \{Maximize the reliability\}). The last two elements of the tuple represent QoS constraints and objective of the query respectively. We use this query throughout this paper.
 \hfill$\blacksquare$}
\end{example}

\noindent
Once a query comes to the system, a set of services are activated based on the query inputs and input specification. We now define the notion of service activation.

\begin{definition}{\bf[Service Activation]}:{\em~
 A service ${\mathcal{S}}_i$ is activated, while serving a query ${\mathcal{Q}}$, if the following are satisfied: (a) All the inputs of ${\mathcal{S}}_i$ are available (either query inputs or the set of outputs of already activated services) in the system, i.e.,
 { $\forall i \in {\mathcal{S}}^{ip}_i, \exists i' \in {\mathcal{Q}}^{ip} \cup {\mathcal{S}}^{op}_{AC}$; such that $f(i') \sqsubseteq f(i)$}. (b) All preconditions of ${\mathcal{S}}_i$ defined over its set of inputs are also satisfied either by the query input specification or by the postconditions of the services that are activated by the query, i.e.,
 { ${\mathcal{IS}} \vee (\bigvee \limits_{\forall {\mathcal{S}}_j \in {\mathcal{S}}_{AC}} {\mathcal{PO}}_j) \rightarrow {\mathcal{PR}}_i$},
 where, ${\mathcal{S}}_{AC}$ is the set of services activated by ${\mathcal{Q}}^{ip}$ directly / transitively and ${\mathcal{S}}^{op}_{AC}$ is the union of the outputs of the services in ${\mathcal{S}}_{AC}$.
 \hfill$\blacksquare$}
\end{definition}

 \begin{example}{\em~
 Consider two services, as defined below
  \begin{table}\centering\scriptsize
 \begin{tabular}{l|l|l}
 \hline
  {\em\bf Service Name}: & \multicolumn{1}{c|}{${\mathcal{S}}_1$}  	& \multicolumn{1}{c}{${\mathcal{S}}_2$}\\\hline
  {\em\bf Input}:        & Image                  		& ProductName \\ 
  {\em\bf Precondition}: & Image in jpeg / png format      	& ProductName in English\\                        
  {\em\bf Method}:       & GetImageToProduct      		& GetReview\\ 
  {\em\bf Output}:       & ProductName            		& Review\\ 
  {\em\bf Postcondition}:& ProductName in English 		& Review in local language\\
 \hline
 \end{tabular}
 \end{table}
 and a query ${\mathcal{Q}} = (\{BWImage\},$ $\{Review\}$, \{BWImage in jpeg format\}, \{Review in local language\}, \{\}, \{\}). According to the ontology, $BWImage$ is a sub concept of $Image$. Therefore, $BWImage$ can be fed as an input to ${\mathcal{S}}_1$. Moreover, according to the input specification, the BWImage is in jpeg format. Therefore, with the given input specification, the precondition of ${\mathcal{S}}_1$ is also satisfied. Hence, ${\mathcal{S}}_1$ is activated first. Once ${\mathcal{S}}_1$ is activated, from the output of ${\mathcal{S}}_1$, ${\mathcal{S}}_2$ is also activated, since ${\mathcal{PO}}_1$ satisfies ${\mathcal{PR}}_2$. Finally, ${\mathcal{S}}_2$ provides ${\mathcal{Q}}^{op}$ while ensuring the output requirement.
  \hfill$\blacksquare$}
 \end{example}

\noindent
It may be noted, in this paper, we use the symbol $A \rightarrow B$ to denote $B$ is weaker than $A$ for any two quantifier free first order logic formulas $A$ and $B$. In other words, whenever $A$ holds $B$ holds as well, however, the converse is not true. Once a query comes to the system, the services are activated directly / transitively. Gradually, a dependency graph \cite{chattopadhyay2015scalable} is constructed to serve the query. We now show an example of a dependency graph on our running example.

\begin{example}{\em~
Considering our running example (shown in Table \ref{tab:ourData}), the dependency graph, constructed in response to the query ${\mathcal{Q}}$ (shown in Example \ref{example:query}) is shown in Figure \ref{fig:exampleLevel0}. Each rectangle presents a group of services having same inputs, outputs, preconditions and postconditions. Each number, situated at the left side of the rectangle, presents the number of services in that group.
\begin{figure}[!htb]
\centering
\includegraphics[width=\linewidth]{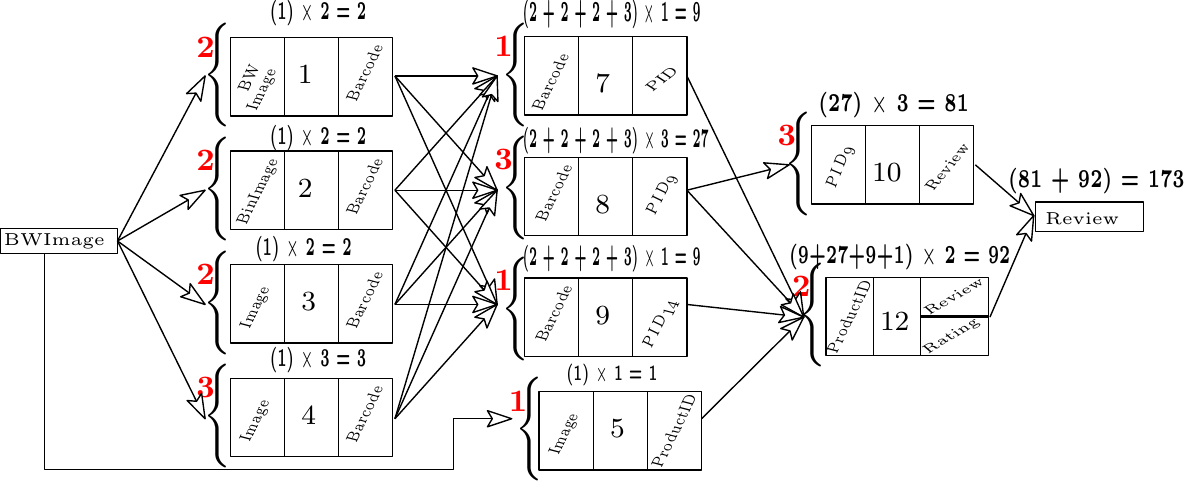}
\caption{Dependency Graph to respond ${\mathcal{Q}}$}
\label{fig:exampleLevel0}
\end{figure}
 \noindent
 It may be noted, initially, based on the query inputs and the input specifications, ${\mathcal{S}}_1$ to ${\mathcal{S}}_{10}$ are activated. From the outputs of ${\mathcal{S}}_1$ to ${\mathcal{S}}_{10}$ and their postconditions, ${\mathcal{S}}_{13}$ to ${\mathcal{S}}_{17}$ are activated. Finally, ${\mathcal{S}}_{18}$ to ${\mathcal{S}}_{20}$, ${\mathcal{S}}_{24}$ and ${\mathcal{S}}_{25}$ are activated, that in turn produces the outputs that serve the query and ensure the output requirements. As shown in Figure \ref{fig:exampleLevel0}, the dependency graph without abstraction consists of 20 services {$(2+2+2+3+1+3+1+1+3+2)$}(rest are not activated). The total number of solutions (in terms of functional dependencies) is 173. The calculation is shown in the figure. Above each rectangle, the calculation shows the number of ways the outputs corresponding to the rectangle can be produced. Finally, the calculation above the query output shows the number of solutions.
 \hfill$\blacksquare$}
\end{example}

\noindent
We now formally discuss the service composition problem. 

\noindent
{\bf\em{Problem formulation}}: We are given the following: 
\begin{enumerate}
 \item A set of web services ${\mathcal{S}} = \{{\mathcal{S}}_1, {\mathcal{S}}_2, \ldots, {\mathcal{S}}_n\}$.
 \item A set of concepts ${\mathcal{C}}$.
 \item For each input-output parameter $io$, the mapping to its concept $f(io)$.
 \item A query ${\mathcal{Q}}$.
\end{enumerate}

\noindent
The aim of service composition is to find a set of services to {\em{serve the query}} with appropriate objectives as specified therein. In this paper, our aim is to expedite the process of composition by reducing the search space as explained in the next section. 



\section{Detailed Methodology}\label{sec:method}
\noindent
Our framework comprises two key steps: {\em abstraction} done at preprocessing time and {\em refinement} performed during execution time, i.e., when a query arrives into the system. In the preprocessing phase, a set of services are abstracted based on their functional semantics to obtain a new abstract service. The major advantage of this step is that it reduces the search space and therefore, expedites computation time. Once a query comes to the system, service composition is performed on the abstract service space first. However, the abstraction has a major limitation due to the lack of information of all the services. In some cases, the solution to a query obtained from the abstract service space may not satisfy all the QoS constraints. Therefore, we further propose to refine an abstraction. Refinement is done in the execution phase. The main benefit of this step is that it makes the method complete \cite{7933195}. However, an additional computational overhead may be added during composition, if refinement is applied. In the following subsections, we discuss 3 abstraction methods, each followed by the corresponding refinement methods. 

\subsection{{\bf \em First level Abstraction: Based on Equivalence Relation}}
\noindent
We first define the notion of equivalence between services.

\begin{definition}{[\bf Service Equivalence:]}\label{def:Equi}{
 Two services ${\mathcal{S}}_1, {\mathcal{S}}_2$ are functionally equivalent, expressed as ${\mathcal{S}}_1 \simeq {\mathcal{S}}_2$, if the following conditions are satisfied:
 \begin{enumerate}
  \item $\forall i \in {\mathcal{S}}^{ip}_1, \exists i' \in {\mathcal{S}}^{ip}_2, \text{ such that, } f(i) = f(i').$
  \item $\forall i' \in {\mathcal{S}}^{ip}_2, \exists i \in {\mathcal{S}}^{ip}_1, \text{ such that, } f(i) = f(i').$
  \item ${\mathcal{PR}}_1 \leftrightarrow {\mathcal{PR}}_2, i.e., {\mathcal{PR}}_1 \rightarrow {\mathcal{PR}}_2 \text{ and } {\mathcal{PR}}_2 \rightarrow {\mathcal{PR}}_1.$
  \item $\forall o \in {\mathcal{S}}^{op}_1, \exists o' \in {\mathcal{S}}^{op}_2, \text{ such that, } f(o) = f(o').$
  \item $\forall o' \in {\mathcal{S}}^{op}_2, \exists o \in {\mathcal{S}}^{op}_1, \text{ such that, } f(o) = f(o').$
  \item ${\mathcal{PO}}_1 \leftrightarrow {\mathcal{PO}}_2, i.e., {\mathcal{PO}}_1 \rightarrow {\mathcal{PO}}_2 \text{ and } {\mathcal{PO}}_2 \rightarrow {\mathcal{PO}}_1.$\hfill$\blacksquare$
  \end{enumerate}
  }
\end{definition}

\noindent
The notion of equivalence adopted here is different from the one in \cite{7933195}. Here the equivalence is considered in the semantic space and not just based on syntactic names of input-outputs.

\begin{example}{
 Consider two services ${\mathcal{S}}_1$ and ${\mathcal{S}}_3$ as shown in Table \ref{tab:ourData}. From the definitions of ${\mathcal{S}}_1$ and ${\mathcal{S}}_3$, ${\mathcal{S}}_1 \simeq {\mathcal{S}}_3$, since according to ontology, $f(BWImage) = f(binImage) = binaryImage$. Moreover, BWImage in jpeg format $\leftrightarrow$ BinImage in jpeg format and BWImage size must be less than $50KB \leftrightarrow$  BinImage size must be less than $50KB$.
 \hfill$\blacksquare$}
\end{example}

\noindent
Two services ${\mathcal{S}}_1$ and ${\mathcal{S}}_2$ satisfying the first three conditions of Definition \ref{def:Equi} are called {\em input equivalent}, whereas, two services satisfying the last three conditions of the definition are called {\em output equivalent}. 
 
It may be noted, the binary relation $``\simeq"$ defined over a set of services is an equivalence relation. The first level of abstraction is done based on the equivalence relationship between services. Since $``\simeq"$ is an equivalence relation, the service repository is partitioned based on this relation. Each equivalence class is then abstracted by a new representative service. The number of services required to serve a query reduces by this abstraction. Since the service repository is partitioned into a set of equivalence classes and the equivalence classes are mutually exclusive and collectively exhaustive, the number of equivalence classes in the service repository is less than or equal to the number of services in the repository.


Consider ${\mathcal{S}}' \subseteq S$ be a set of equivalent services. Using the above method, ${\mathcal{S}}'$ is abstracted by a new service, say ${\mathcal{S}}1_{i}$. To assign the inputs, outputs of the abstract service, we choose any member service (${\mathcal{S}}_{i} \in {\mathcal{S}}'$) from its corresponding equivalence class (since each member service is functionally equivalent) and assign the following: 
{ ${\mathcal{S}}1^{(ip)}_{i} = \{f(j) : \forall j \in {\mathcal{S}}^{ip}_{i}\}; ~~{\mathcal{S}}1^{(op)}_{i} = \{f(o) : \forall o \in {\mathcal{S}}^{op}_{i}\};$ ${\mathcal{PR}}1_{i} = {\mathcal{PR}}_{i}; ~~{\mathcal{PO}}1_{i} = {\mathcal{PO}}_{i}$}.
We now illustrate this abstraction on our working example.

\begin{example}{{
 Considering our running example (shown in Table \ref{tab:ourData}), the dependency graph constructed in response to ${\mathcal{Q}}$ (shown in Example \ref{example:query}) with the first level abstract services is shown in Figure \ref{fig:exampleLevel1}. 
 \begin{figure}
\centering
\includegraphics[width=\linewidth]{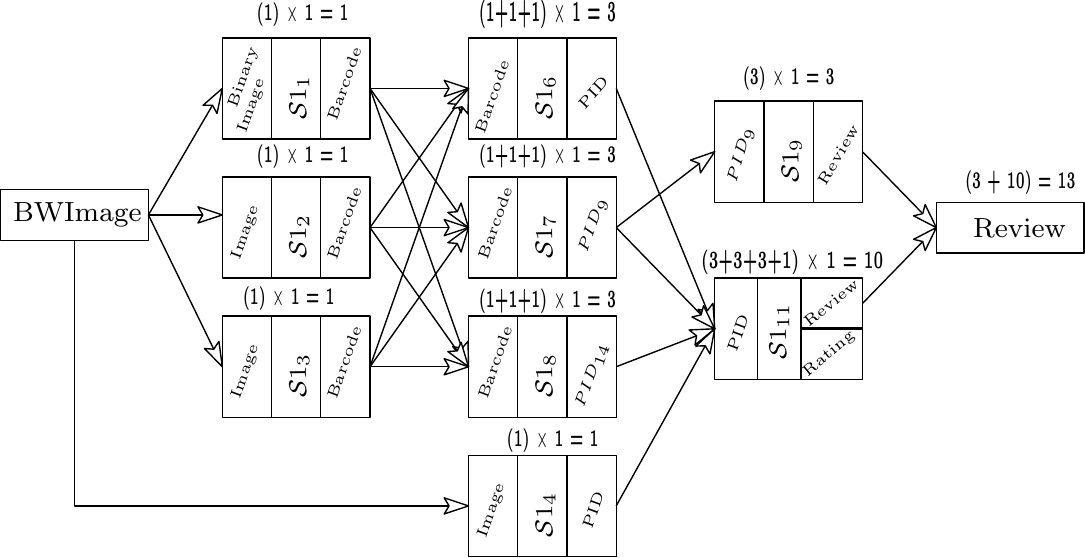}
\caption{{Dependency Graph using first level abstract services to respond ${\mathcal{Q}}$}}
\label{fig:exampleLevel1}
\end{figure}
After this abstraction, the dependency graph now consists of 9 services and the total number of solutions (in terms of functional dependencies) is now reduced from 173 to 13. The calculation is shown in the figure.
 \hfill$\blacksquare$}}
\end{example}

Once an abstract service ${\mathcal{S}}1_{i}$ is constructed for each equivalence class, it is also important to assign the QoS values to ${\mathcal{S}}1_{i}$. To do so, we choose a representative service from each corresponding equivalence class and assign the QoS values of the representative service to it. Consider ${\mathcal{S}}1_{i}$ is an abstract service corresponding to the equivalence class ${\mathcal{S}}' \subseteq S$. Below, we discuss the method of choosing the representative service.

\noindent
{\bf Step 1:} For each service ${\mathcal{S}}_{i}$ in an equivalence class, we first normalize the values of the QoS parameters between 0 and 1. To normalize, at first, the maximum (${\mathcal{P}}^{Max}_l$) and minimum (${\mathcal{P}}^{Min}_l$) values of each QoS parameter ${\mathcal{P}}_l$ across the equivalence class are computed. The normalized value of the parameter is computed as: 
 
\noindent
{
$NV({\mathcal{P}}^{i}_l) = 
\begin{cases}
\frac{{\mathcal{P}}^{i}_l - {\mathcal{P}}^{Min}_l}{{\mathcal{P}}^{Max}_l - {\mathcal{P}}^{Min}_l}& \text{, if } {\mathcal{P}}_l \text{is a positive QoS} \\
\frac{{\mathcal{P}}^{Max}_l - {\mathcal{P}}^{i}_l}{{\mathcal{P}}^{Max}_l - {\mathcal{P}}^{Min}_l}& \text{, if } {\mathcal{P}}_l \text{is a negative QoS}
\end{cases}$  }
      
\noindent
{\bf Step 2:} Once the QoS parameters are normalized, we compute the weighted sum of the normalized QoS parameters:

{$WS ({\mathcal{S}}_{i}) = \sum \limits_{l=1}^m (NV({\mathcal{P}}^{i}_l) * w_l); ~~~ w_l \in [0, 1]$}

\noindent
We assume that the weights of the QoS parameters can be provided externally by the user. If no weights are specified, all QoS parameters are provided equal weights, i.e., $\forall l \in \{1, 2. \ldots, m\}, w_l = \frac{1}{m}$.

\noindent
{\bf Step 3:} The service with maximum weighted sum is chosen as the representative service for each equivalence class. 

\subsection{\bf \em{Composition with First Level Abstract Services}}
\noindent 
Any service composition algorithm can be applied over the abstract service space to get a solution to a query. The major advantage of this abstraction method is that for a single QoS parameter, it preserves the solution quality. In other words, if an optimal algorithm is used to find a solution, it is guaranteed to find an optimal solution from the abstract service space as well. Similarly, if a heuristic algorithm is used to find a solution from the abstract space, it is guaranteed that the solution quality does not degrade in comparison to the solution that can be generated from the original / un-abstracted service space by the same algorithm. However, for multiple QoS parameters, we cannot ensure this guarantee. In some scenarios, it may not be possible to generate any solution from the abstract service space that satisfies all the QoS constraints, though one exists in the original service space. In such scenarios, refinement is required. In the next subsection, we discuss the refinement techniques in detail.

\subsection{\bf \em{First Level Refinement}}
\noindent 
In this paper, we consider two refinement techniques: partial refinement and complete refinement.

\subsubsection*{{\bf Partial Refinement}}
\noindent
If the composition algorithm returns a solution in terms of the abstract services which does not satisfy all the QoS constraints, we first apply the partial refinement on the solution. It may be noted, one solution in terms of abstract services consists of multiple solutions in terms of original services. The main idea of the partial refinement technique is to change the representative service corresponding to each abstract service belonging to the solution and thus to change the QoS values of the solution. As discussed earlier, the representative service is chosen on the basis of the weighted sum method. Here, we change the weights of the QoS parameters on the basis of the constraint values to choose a different service. We first consider a solution $s$ consisting of the set of abstract services ${\mathcal{S}}1_s$. We now discuss how we modify the weights of the QoS parameters.

\noindent
{\bf{Step 1:}} For each abstract service ${\mathcal{S}}1_{i} \in {\mathcal{S}}1_s$, we find the maximum $Max({\mathcal{S}}1_{i}, {\mathcal{P}}_l)$ and the minimum $Min({\mathcal{S}}1_{i}, {\mathcal{P}}_l)$ values for each QoS parameter ${\mathcal{P}}_l$ from its corresponding equivalence class.

\noindent
{\bf{Step 2:}} For the solution $s$, we first compute the maximum aggregated value $Aggr_{max}(s, {\mathcal{P}}_l)$ for each QoS parameter ${\mathcal{P}}_l$, considering $Max({\mathcal{S}}1_{i}, {\mathcal{P}}_l)$ as the value of ${\mathcal{P}}_l$ of ${\mathcal{S}}1_{i}$, for all ${\mathcal{S}}1_{i} \in {\mathcal{S}}1_s$. Similarly, we compute the minimum aggregated value $Aggr_{min}(s, {\mathcal{P}}_l)$, for each QoS parameter ${\mathcal{P}}_l$, considering $Min({\mathcal{S}}1_{i}, {\mathcal{P}}_l)$ as the value of ${\mathcal{P}}_l$ of ${\mathcal{S}}1_{i}$, for all ${\mathcal{S}}1_{i} \in {\mathcal{S}}1_s$. 

\noindent
{\bf{Step 3:}} Among $Aggr_{max}(s, {\mathcal{P}}_l)$ and $Aggr_{min}(s, {\mathcal{P}}_l)$, we consider one as the best value and the other as the worst value of ${\mathcal{P}}_l$ for the solution. If ${\mathcal{P}}_l$ is a positive (negative) QoS parameter, $Aggr_{max}(s, {\mathcal{P}}_l)$ is considered to be the best (worst) value and $Aggr_{min}(s, {\mathcal{P}}_l)$ is considered to be the worst (best) value of ${\mathcal{P}}_l$. 

\noindent
{\bf{Step 4:}} If the best value of any QoS parameter ${\mathcal{P}}_l$ of the solution does not satisfy the constraint, we do not apply the partial refinement, since, no modification of the representative service corresponding to any abstract service can make the constraint satisfiable. 

\noindent
{\bf{Step 5:}} If the worst value of any QoS parameter ${\mathcal{P}}_l$ of the solution is better than the constraint value, we ignore the QoS parameter ${\mathcal{P}}_l$ from the weight computation. This is mainly because, no matter what representative service is chosen for an abstract service belonging to the solution, the constraint value is always satisfied. 

\noindent
{\bf{Step 6:}} If the constraint value of ${\mathcal{P}}_l$ (say $\delta({\mathcal{P}}_l)$) lies between the best and the worst aggregated value, we compute the normalized value of the constraint as:
 
 \noindent
 \[NV({\mathcal{P}}_l) =
 \begin{cases}
    0 & \text{ if Step 5 is satisfied  }\\
    \frac{\delta({\mathcal{P}}_l) - Aggr_{min}(s, {\mathcal{P}}_l)}{Aggr_{max}(s, {\mathcal{P}}_l) - Aggr_{min}(s, {\mathcal{P}}_l)} & \text{if } {\mathcal{P}}_l \text{ is a positive QoS}\\
    \frac{Aggr_{max}(s, {\mathcal{P}}_l) - \delta({\mathcal{P}}_l)}{Aggr_{max}(s, {\mathcal{P}}_l) - Aggr_{min}(s, {\mathcal{P}}_l)} & \text{if } {\mathcal{P}}_l \text{ is a negative QoS}
  \end{cases}
 \]
 \\~
 
 \noindent
 It may be noted, when we assign the QoS values to an abstract service during the preprocessing phase, we do not have any information regarding the QoS requirements of a query, which can only be known when a query comes to the system during the execution phase. Moreover, the QoS constraints vary across queries. Therefore, in the partial refinement phase, our intuition is to set the weights of the QoS parameters according to the QoS requirements by the query. Since the QoS parameters are disparate in nature, in order to compare them, we first normalize a QoS constraint value with respect to the maximum and minimum aggregated values of the parameter that can be provided by the solution. It may be noted, a solution can provide maximum normalized QoS value as 1 for any QoS parameter. However, a QoS parameter ${\mathcal{P}}_l$ requires the normalized value as $NV({\mathcal{P}}_l)$ to satisfy its corresponding constraint. Therefore, we set the weight of the QoS parameters proportionate to their normalized constraint values.

\noindent
{\bf{Step 7:}} The weight for each QoS parameter is finally computed as, 
 \[WT({\mathcal{P}}_l) = \frac{NV({\mathcal{P}}_l)}{\sum \limits_{k=1}^m NV({\mathcal{P}}_l)}\]

\noindent
Once we compute the weights of the QoS parameters, we reiterate the algorithm for choosing the representative service as discussed above and assign the QoS values of the representative service to its corresponding abstract service. Finally, we compute the aggregated QoS values of the solution with the modified abstract services. If the solution now satisfies all the QoS constraints, we return the solution. Otherwise, we apply complete refinement as discussed below.

\begin{example}{
 Consider a solution to a query as shown in Figure \ref{fig:refinement}, which is a sequential combination of two abstract services ${\mathcal{S}}1_1$ and ${\mathcal{S}}1_2$, each corresponding to 3 un-abstract services. Each service has 2 QoS parameters: response time and reliability (as shown by the tuple in the figure). The calculated weighted sum for each service is also shown in figure. For each abstract service, the corresponding representative service is shown by gray color. The QoS values of the solution is $(80, 0.76)$. Now consider the following two cases: 
 
 \begin{figure}[!htb]
  \centering
    \includegraphics[]{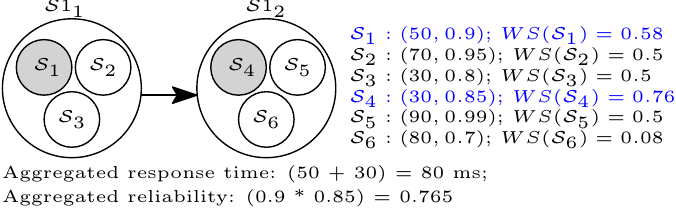}
  \caption{A solution using abstract services}
  \label{fig:refinement}
  \end{figure}
 
 \noindent
 {\bf Case 1:} The query has the following constraints: $(200, 0.8)$. Clearly, the solution violates the reliability constraint. Therefore, we apply partial refinement. The maximum and minimum values of the response time corresponding to ${\mathcal{S}}1_1$ are 70 and 30ms respectively and the same for ${\mathcal{S}}1_2$ are 90 and 30ms respectively. Therefore, the maximum aggregated response time of the solution is 160ms (90 + 70) and the minimum aggregated value for the solution is 60ms (30 + 30). Since the response time constraint is 200ms, we can ignore response time for the weight calculation. This is because whatever representative service is chosen for an abstract service, the response time constraint remains satisfied.
 Similarly, the maximum and minimum values of the reliability corresponding to ${\mathcal{S}}1_1$ are 0.95 and 0.8 respectively and the same for ${\mathcal{S}}1_2$ are 0.99 and 0.7 respectively. Therefore, the maximum aggregated reliability of the solution is 0.94 (0.95 * 0.99) and the minimum aggregated value for the solution is 0.56 (0.8 * 0.7). The constraint value lies between 0.94 and 0.56. Therefore, we first calculate the normalized constraint value for reliability $\frac{0.8 - 0.56}{0.94 - 0.56} = 0.63$. Following the above algorithm, the weights for response time and reliability are 0 and 1 ($0.63 / 0.63$). Using the calculated weights, we reiterate the algorithm for choosing the representative service. ${\mathcal{S}}_2$ is now the representative service for ${\mathcal{S}}1_1$ and ${\mathcal{S}}_5$ is the representative service for ${\mathcal{S}}1_2$, since they have now maximum weighted sum compared to the other members belonging to their corresponding equivalence classes. Hence, the solution now has the QoS values $(160, 0.94)$, which satisfies all the constraints.\\
 
 \noindent
 {\bf Case 2:} The query has the following constraints: $(50, 0.8)$. In this case, the response time constraint can never be satisfied using these two abstract services, since the best response time value for the solution is 60ms. Therefore, complete refinement is required.
 \hfill$\blacksquare$}
\end{example}

\subsubsection*{{\bf Complete Refinement}}
\noindent
If the composition algorithm does not return any solution or the partial refinement technique fails to generate a solution, we apply the complete refinement technique. The main idea of the complete refinement is to revert the abstract services to the original services and apply the composition algorithm on the original service space once more to obtain the solution.

\subsection{\bf \em{Second level Abstraction: Based on Dominance Relation}}
\noindent
This abstraction is done based on the notion of {\em dominance}. In this abstraction, we cluster the set of abstract services formed after the step above into further groups, where one service dominates the rest of the services. 

\begin{definition}{\bf[Dominance]:}\label{def:dom}{\em~
 A service ${\mathcal{S}}1_1$ dominates another service ${\mathcal{S}}1_2$, expressed as ${\mathcal{S}}1_1 \succ {\mathcal{S}}1_2$, if the following conditions hold:
 \begin{itemize}
   \item $\forall i \in {\mathcal{S}}1^{ip}_1, \exists i' \in {\mathcal{S}}1^{ip}_2$, such that $i \sqsupseteq i'$. 
   \item $\forall o' \in {\mathcal{S}}1^{op}_2, \exists o \in {\mathcal{S}}1^{op}_1$, such that $o \sqsubseteq o'$. 
   \item ${\mathcal{PR}}1_2 \rightarrow {\mathcal{PR}}1_1$. ${\mathcal{PO}}1_1 \rightarrow {\mathcal{PO}}1_2$.
   \item ${\mathcal{S}}1_1$ and ${\mathcal{S}}1_2$ are not output equivalent.
  \hfill$\blacksquare$
 \end{itemize}}
\end{definition}

\noindent
The notion of dominance essentially indicates that the functionality of ${\mathcal{S}}1_2$ is covered by the functionality of ${\mathcal{S}}1_1$. In other words, ${\mathcal{S}}1_1$ can be activated, if ${\mathcal{S}}1_2$ gets activated and ${\mathcal{S}}1_1$ provides more functionality than ${\mathcal{S}}1_2$ (either provides more outputs or provides more specific outputs, i.e. at least one output of ${\mathcal{S}}1_1$ is a sub concept of ${\mathcal{S}}1_2$ or the postcondition of ${\mathcal{S}}1_1$ implies the postcondition of ${\mathcal{S}}1_2$). The last condition is used for the next abstraction level.

\begin{example}{\em~
 Consider two services ${\mathcal{S}}1_9$ and ${\mathcal{S}}1_{11}$, as shown in Figure \ref{fig:exampleLevel1}. From their definitions, ${\mathcal{S}}1_{11} \succ {\mathcal{S}}1_9$, since according to ontology, $PID_9 \sqsubset PID$, ${\mathcal{PR}}1_{9} \rightarrow {\mathcal{PR}}1_{11}$ and \{Review\} $\subset$ \{Review, Rating\}. It may be noted, ${\mathcal{S}}1_{11}$ must be activated, if ${\mathcal{S}}1_9$ gets activated and ${\mathcal{S}}1_{11}$ provides more outputs than ${\mathcal{S}}1_9$.
 \hfill$\blacksquare$}
\end{example}

\noindent
Consider ${\mathcal{S}}1$ is the set of first level abstract services. If a service ${\mathcal{S}}1_i \in {\mathcal{S}}1$ is not dominated by any other service in ${\mathcal{S}}1$ (i.e., ${\mathcal{S}}1_i$ is non-dominated), ${\mathcal{S}}1_i$ forms a group consisting of all the first level abstract services that are dominated by ${\mathcal{S}}1_i$. The group is finally abstracted by a new service, say ${\mathcal{S}}2_i$. The input, output parameters, preconditions and postconditions of ${\mathcal{S}}2_i$ are same as ${\mathcal{S}}1_i$. The QoS values of ${\mathcal{S}}1_i$ are assigned to ${\mathcal{S}}2_i$. It may be noted, though, every dominated service ${\mathcal{S}}2_j$ belongs to at least one group, however, no such service forms its separate group containing all the first level abstract services that are dominated by ${\mathcal{S}}2_j$. The set of services corresponding to ${\mathcal{S}}2_i$ and ${\mathcal{S}}2_j$ $(i \neq j)$ are not always mutually exclusive, since one service can be dominated by multiple services. However, the number of services after the second level of abstraction still reduces. This is because the maximum number of non-dominated services is at most equal to the number of services after the first level.

\begin{example}{ \em
 Considering our running example (shown in Table \ref{tab:ourData}), the dependency graph constructed in response to ${\mathcal{Q}}$ (shown in Example \ref{example:query}) with the second level abstract services is shown in Figure \ref{fig:exampleLevel3}.
 \begin{figure}[!htb]
 \centering
 \includegraphics[width=\linewidth]{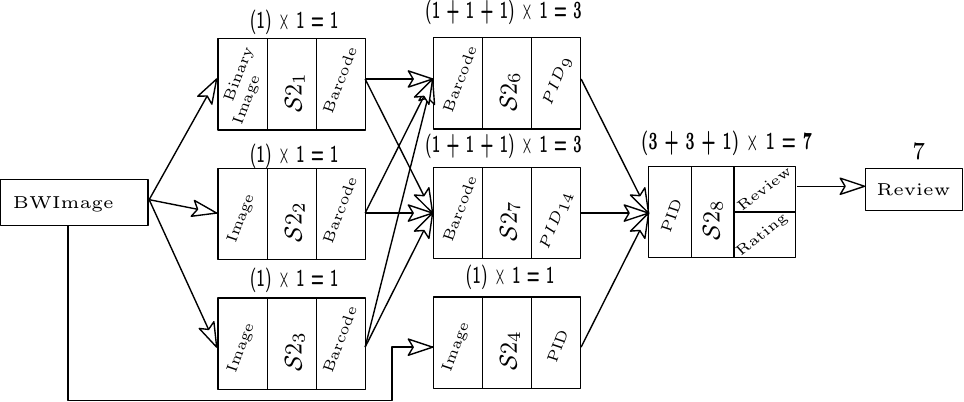}
 \caption{{Dependency Graph using second level abstract services to respond ${\mathcal{Q}}$}}
 \label{fig:exampleLevel3}
 \end{figure}
The dependency graph now consists of 7 services and the total number of solutions (in terms of functional dependencies) is now reduced from 13 to 7. The calculation is shown in the figure.
 \hfill$\blacksquare$}
\end{example}

\noindent
As earlier, any service composition algorithm can be applied over the abstract service space to get a solution to a query. However, unlike the first level abstraction, the second level abstraction is not quality preserving. Therefore, refinement may be required even for a single QoS parameter.


\subsection{\bf \em{Second Level Refinement}}
\noindent 
In this level as well, we have two different refinement techniques: partial refinement and complete refinement. The partial refinement here is similar to the previous level. The only difference is that for a second level abstract service ${\mathcal{S}}2_i$, all its corresponding first level abstract services do not participate in selecting the representative service for ${\mathcal{S}}2_i$.
This is because of the following reason: consider ${\mathcal{S}}1_i$ is the non dominated first level service belonging to the group corresponding to ${\mathcal{S}}2_i$. First of all, it is not necessary, when ${\mathcal{S}}1_i$ gets activated all other services in the group are also activated with the available inputs in the system. Furthermore, ${\mathcal{S}}1_i$ provides more functionality than any other first level abstract services in that group, which may require serving the query. In this case, no other service can participate in serving the query. Therefore, we have to first select the set of services corresponding to a second level abstract service that can serve the query.

Consider a second level abstract service ${\mathcal{S}}2_i$ belonging to the solution corresponding to the set of first level abstract services: ${\mathcal{S}}1' \in {\mathcal{S}}1$, where ${\mathcal{S}}1$ is the set of first level abstract services. 
A first level abstract service ${\mathcal{S}}1_{i} \in {\mathcal{S}}1'$ participates in the representative service selection method, if the following conditions are satisfied: 
(1) ${\mathcal{S}}1_{i}$ can be activated from the available input set, when ${\mathcal{S}}2_i$ is activated.
(2) ${\mathcal{S}}1_{i}$ produces all the parameters (in the form of identical / sub concept) that are required to be produced by ${\mathcal{S}}2_i$ only, i.e., no other service in the solution produces them. More formally,
$\forall io_j \in IO, \exists io_l \in {\mathcal{S}}1^{op}_{i}, io_k \sqsupseteq io_l$, where $IO$ is the set of parameters that are required to be produced by ${\mathcal{S}}2_i$ only. (3)The post conditions of ${\mathcal{S}}1_{i}$ must satisfy the set of conditions that are required to be ensured by ${\mathcal{S}}2_i$. More formally, ${\mathcal{PO}}1_i \rightarrow {\mathcal{PC}}$, where, ${\mathcal{PC}}$ is the set of conditions that are required to be ensured by ${\mathcal{S}}2_i$.


\noindent
Once the representative services are chosen, the input-output parameters, pre and postconditions and the QoS values are changed according to the chosen representative services. If a valid solution is obtained, we return the solution. Otherwise, we apply the complete refinement technique. 

\subsection{{\bf \em Abstraction Using Input Implication Output Equivalence}}
\noindent
This is the third and the final level of abstraction. The abstraction is done based on the notion of input equivalence and output implication. Here, we focus on the set of services that produce equivalent set of outputs and ensure the same postconditions, however, one service dominates the other service in terms of inputs and preconditions.

\begin{definition}{\bf[Input Implication Output Equivalence Relation (IIOE)]}:{\em~
 IIOE is a relation between two services ${\mathcal{S}}2_1$ and ${\mathcal{S}}2_2$ (expressed as ${\mathcal{S}}2_1 \xrightarrow[\text{IIOE}]{} {\mathcal{S}}2_2$), which ensures ${\mathcal{S}}2_1$ is output equivalent to ${\mathcal{S}}2_2$ and ${\mathcal{S}}2_1$ implies ${\mathcal{S}}2_2$ in terms of input parameters satisfying the following conditions:
 \begin{itemize}
  \item $\forall i \in {\mathcal{S}}2^{ip}_2, \exists i' \in {\mathcal{S}}2^{ip}_1$, such that $i \sqsupseteq i'$.
  \item ${\mathcal{PR}}2_1 \rightarrow {\mathcal{PR}}2_2$.
 \hfill$\blacksquare$.  
 \end{itemize}}
\end{definition}

\noindent
The above definition means, though both the services ${\mathcal{S}}2_1$ and ${\mathcal{S}}2_2$ provide same output and ensure same functionality, however, ${\mathcal{S}}2_1$ is more generic than ${\mathcal{S}}2_2$ in a sense that ${\mathcal{S}}2_1$ can be activated with less number of inputs or with weaker preconditions \cite{DBLP:conf/icws/JuSR12}.

\begin{example}{\em~
 Consider two services ${\mathcal{S}}2_1$ and ${\mathcal{S}}2_3$, as shown in Figure \ref{fig:exampleLevel3}. 
 From their definitions, ${\mathcal{S}}2_1 \xrightarrow[\text{IIOE}]{} {\mathcal{S}}2_3$, since both the services are output equivalent and
 according to the ontology, $BinaryImage \sqsubset Image$ and ${\mathcal{PR}}2_1 \rightarrow {\mathcal{PR}}2_2$.
 \hfill$\blacksquare$}
\end{example}

\noindent
We first construct an IIOE graph based on the IIOE relation between the services.

\begin{definition}{\bf[IIOE Graph]}:{\em~
 An IIOE graph $G_{IIOE} = (V, E)$ is a directed acyclic graph, where $V$ is the set of nodes and $E = (u_i, u_j); u_i, u_j \in V$ is the set of edges. Each node $u_i \in V$ contains a first-level abstract service ${\mathcal{S}}2_i$. An edge from $u_i$ to $u_j$ exists, if ${\mathcal{S}}2_i \xrightarrow[\text{IIOE}]{} {\mathcal{S}}2_j$.
 \hfill$\blacksquare$}
\end{definition}

\noindent
We first create all possible nodes and edges in the IIOE graph. Finally, we remove all the transitive edges to make the composition algorithm efficient, as discussed later. If ${\mathcal{S}}2_i \rightarrow {\mathcal{S}}2_j$ and ${\mathcal{S}}2_j \rightarrow {\mathcal{S}}2_k$, we remove the edge from $u_i$ to $u_k$. 

Since, each node $u_i \in V$ contains a second-level abstract service ${\mathcal{S}}2_i$, the number of nodes in $G_{IIOE}$ is equal to the number of second-level abstract services. Now consider a node $u_i \in V$ in $G_{IIOE}$ containing an abstract service ${\mathcal{S}}2_i$. It may be noted, if ${\mathcal{S}}2_i$ is activated in the system, all the services corresponding to the {\em tree} rooted at $u_i$ are also activated at the same time, and provide same functionality as ${\mathcal{S}}2_i$. Therefore, the entire tree is abstracted by a new service, say ${\mathcal{S}}3_i$. In other words, each final level abstract service ${\mathcal{S}}3_i$ corresponds to a tree rooted at the node $u_i$ in the implication graph. Therefore, the number of final level abstract services is equal to the number of second level abstract services. The input, output parameters, pre and postconditions of ${\mathcal{S}}3_i$ are exactly same as ${\mathcal{S}}2_i$ and the QoS assignment technique for this abstraction level is same as the first level abstraction, i.e., for each final level abstract service ${\mathcal{S}}3_i$, we choose a representative service from its corresponding set of second level services and assign its QoS values to ${\mathcal{S}}3_i$.

\begin{example}{\em~
 Consider three services ${\mathcal{S}}2_1$, ${\mathcal{S}}2_2$ and ${\mathcal{S}}2_3$, as shown in Figure \ref{fig:exampleLevel3}. From their definitions, ${\mathcal{S}}2_1 \rightarrow {\mathcal{S}}2_2 \rightarrow {\mathcal{S}}2_3$, since according to ontology, $BinaryImage \sqsubset Image$ and ${\mathcal{PR}}2_1 \rightarrow {\mathcal{PR}}2_3$, ${\mathcal{PR}}2_2 \rightarrow {\mathcal{PR}}2_3$. $G_{IIOE}$ constructed from the services is shown in Figure \ref{fig:exampleLevel4}. Each third level abstract service is shown with a different color in the figure. Considering our running example (shown in Table \ref{tab:ourData}), the dependency graph constructed in response to ${\mathcal{Q}}$ (shown in Example \ref{example:query}) with the final level abstract services is shown in Figure \ref{fig:exampleLevel4}.   
 \begin{figure}[!htb]
 \centering
 \includegraphics[width=\linewidth]{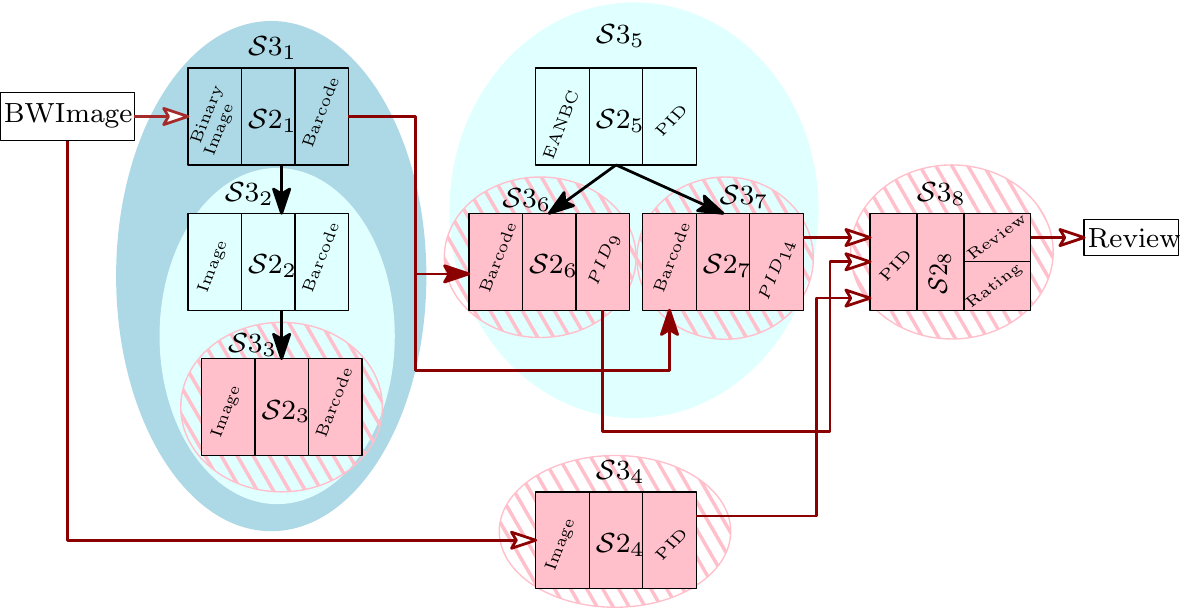}
 \caption{{Dependency Graph using final level abstract services to respond to ${\mathcal{Q}}$}}
 \label{fig:exampleLevel4}
 \end{figure}
After the final level of abstraction, the dependency graph now consists of 5 services and the total number of solutions (in terms of functional dependencies) is now reduced from 7 to 3
{
($|((\{{\mathcal{S}}3_1\} \times \{{\mathcal{S}}3_{6}, {\mathcal{S}}3_{7}\}) \cup \{{\mathcal{S}}3_{4}\})  \times \{{\mathcal{S}}3_{8}\}| = $
$(1 \times 2 + 1) \times 1 \times 1 = 3$.}).
 \hfill$\blacksquare$}
\end{example}

\noindent
Though the number of services in this step is exactly same as in the previous step, the number of nodes in the dependency graph reduces from the previous level of abstraction. During composition, if one final-level abstract service ${\mathcal{S}}3_i$ is chosen in the dependency graph, no other final level service is chosen for which its corresponding second level service belongs to the tree rooted at the node containing ${\mathcal{S}}2_i$, where ${\mathcal{S}}3_i$ corresponds to the tree rooted at ${\mathcal{S}}2_i$. Therefore, the composition algorithm becomes efficient, if it starts traversing the graph top down (i.e., starting from the root node of any connected component of the IIOE graph) during dependency graph construction.

\vspace{0.1cm}
\begin{example}{\em~
 As shown in Figure \ref{fig:exampleLevel4}, if ${\mathcal{S}}3_1$ is chosen to serve a query, ${\mathcal{S}}3_2$ and ${\mathcal{S}}3_3$ are not chosen. Similarly, if ${\mathcal{S}}3_5$ is chosen, ${\mathcal{S}}3_6$ and ${\mathcal{S}}3_7$ are not chosen. 
 \hfill $\blacksquare$}  
\end{example}

\noindent
The final level refinement techniques are exactly same as the first level refinement techniques.


\subsection{\bf \em{Solution Reconstruction and Analysis}}
\noindent
Once a solution is generated in terms of the abstract services, we revert the solution from the abstract space to the original service space. The abstract service in each level is replaced by its previous level service until we reach the original service space. Our framework provides the following guarantees.




\begin{lemma}{{\textbf{\em{Soundness}}}: }\label{soundAbstraction}
 A QoS constraint satisfying solution containing abstract services produced by any composition algorithm is a valid solution satisfying all QoS constraints on the original service space. 
 \hfill$\blacksquare$
\end{lemma}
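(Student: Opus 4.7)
The plan is to argue by reverse induction over the three abstraction levels: starting from a constraint-satisfying solution $s_{abs}$ expressed in terms of the highest-level abstract services used during composition, I will peel off one layer of abstraction at a time and show that at every step both \emph{functional validity} (the reverted services are activatable and together serve the query) and \emph{QoS validity} (all constraints in $\mathcal{CP}$ remain satisfied) are preserved. Since the reconstruction process described in the paper (Subsection on Solution Reconstruction) replaces each abstract service by its chosen representative in the immediately lower level, the induction will terminate in the original service space $\mathcal{S}$.

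First I would establish an invariant for a single reversion step. Let $\mathcal{S}k_i$ be a level-$k$ abstract service appearing in the solution, and let $\mathcal{S}(k{-}1)_i$ be its representative in the level-$(k{-}1)$ space. By construction of each abstraction level, the input set, output set, precondition and postcondition of $\mathcal{S}k_i$ are inherited (up to concept mapping) from $\mathcal{S}(k{-}1)_i$, and the QoS tuple of $\mathcal{S}k_i$ is exactly that of $\mathcal{S}(k{-}1)_i$. Hence replacing $\mathcal{S}k_i$ by $\mathcal{S}(k{-}1)_i$ leaves the activation predicate unchanged (the inputs remain available from predecessors, the precondition is still implied by $\mathcal{IS}$ together with the postconditions of preceding activated services) and leaves every aggregated QoS value of the solution exactly unchanged; so every constraint in $\mathcal{CP}$ that was satisfied before the swap remains satisfied after it.

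The level-$3$ $\to$ level-$2$ and level-$1$ $\to$ original steps are straightforward because the groups are defined by equivalence (Definition \ref{def:Equi}) or by IIOE, where every member of the group has outputs/post\-con\-ditions compatible with the abstract service's role in the solution. The main obstacle I expect is the level-$2$ $\to$ level-$1$ step, because a level-$2$ abstract service $\mathcal{S}2_i$ groups services related by dominance, and a dominated first-level service need not be activatable from the inputs actually available at that point of the dependency graph, nor need it provide every output/postcondition that $\mathcal{S}2_i$ was responsible for supplying in the solution. The paper addresses exactly this subtlety in the Second Level Refinement subsection by restricting representative selection to those first-level abstract services that (i) are activatable from the inputs available when $\mathcal{S}2_i$ is activated, (ii) produce every parameter that $\mathcal{S}2_i$ must produce on behalf of the solution, and (iii) whose postconditions entail the set of postconditions required of $\mathcal{S}2_i$. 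I will therefore invoke these three conditions to show that the representative chosen for $\mathcal{S}2_i$ can stand in for $\mathcal{S}2_i$ without breaking the activation chain or the output requirement $\mathcal{OR}$ of the query.

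Finally, composing the three reversion steps, the solution in the original service space uses services whose QoS values aggregate to exactly the same vector as the abstract solution's, so every constraint in $\mathcal{CP}$ that held for $s_{abs}$ still holds; and by the activation invariant, every service in the reverted solution is properly activated and the set of outputs collectively contains $\mathcal{Q}^{op}$ (up to the concept relation $\sqsubseteq$) while satisfying $\mathcal{OR}$. This is exactly the definition of a valid solution, yielding soundness. The only routine work that remains is to verify, for each of the three abstraction definitions, that the claimed inheritance of inputs, outputs, pre/post\-conditions and QoS values is literally what the construction gives; this is immediate from Definitions \ref{def:Equi} and \ref{def:dom} and the IIOE definition, together with the QoS assignment rule ``assign the QoS values of the representative service to $\mathcal{S}k_i$'' stated at each level.
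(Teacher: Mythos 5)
Your proposal is correct and follows essentially the same route as the paper's own proof: revert the solution level by level, observing that each abstract service is replaced by a lower-level service carrying identical QoS values, so the aggregated QoS vector (and hence every constraint in $\mathcal{CP}$) is preserved. You are in fact more careful than the paper, which only argues QoS preservation and leaves the functional-activation invariant (including the level-$2\to 1$ dominance subtlety you flag) implicit.
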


\begin{proof}
 Once a QoS constraint satisfying solution is generated in terms of the abstract services, the solution is reverted from the abstract space to the original service space. It may be noted, each $i^{th}$ ($i = 1, 2, 3$) level abstract service of the solution, containing the $i^{th}$ level abstract services, is replaced by its corresponding previous level service having the same QoS values. Therefore, the overall QoS values of the solution remains unchanged on the original service space. Hence, we always get a valid solution satisfying all QoS constraints on the original service space.
\end{proof}

\begin{lemma}
 A solution (in terms of functional dependencies) to a query can always be constructed by composing the abstract services, if and only if there exists a solution to the query in the original service space. 
 \hfill$\blacksquare$
\end{lemma}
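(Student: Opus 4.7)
The plan is to prove the biconditional by handling each direction separately, with the forward direction $(\Leftarrow)$ following almost immediately from the reconstruction argument used in the soundness lemma, and the backward direction $(\Rightarrow)$ being established by induction on the three abstraction levels.

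For the direction ``abstract solution exists $\Rightarrow$ original solution exists,'' I would take any functional-dependency solution over the abstract services and unfold it level-by-level: each third-level abstract service is replaced by some second-level service in its IIOE-tree, each second-level service by the non-dominated first-level service it wraps, and each first-level service by any member of its equivalence class. Because every abstraction step was defined to preserve (up to the ontology map $f$) the inputs, outputs, preconditions and postconditions of the chosen representative, each activation condition in the dependency graph continues to hold after unfolding, so the resulting chain of original services is a valid solution.

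For ``original solution exists $\Rightarrow$ abstract solution exists,'' let $\sigma$ be a solution in the original service space. I would construct solutions $\sigma_1, \sigma_2, \sigma_3$ in the three abstract layers by successive replacement. For Level~1, each $\mathcal{S}_i \in \sigma$ sits in a unique equivalence class represented by some $\mathcal{S}1_k$; by Definition~\ref{def:Equi}, the substitution preserves all activation prerequisites since equivalent services are interchangeable modulo $f$ and their pre/postconditions are logically equivalent. For Level~2, each $\mathcal{S}1_i \in \sigma_1$ is either itself non-dominated (so it maps to a Level-2 service) or is dominated by some non-dominated $\mathcal{S}1_j$, whose Level-2 abstraction $\mathcal{S}2_j$ by Definition~\ref{def:dom} needs inputs that are super-concepts of $\mathcal{S}1_i$'s, has weaker preconditions, and produces outputs that are identical or more specific with at-least-as-strong postconditions. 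Hence $\mathcal{S}2_j$ can be activated wherever $\mathcal{S}1_i$ was, and the downstream services that consumed $\mathcal{S}1_i$'s outputs continue to be activatable since the activation rule only requires $f(i') \sqsubseteq f(i)$. For Level~3, every $\mathcal{S}2_i \in \sigma_2$ lies in the IIOE-tree rooted at the node whose abstraction is $\mathcal{S}3_r$; since IIOE enforces output equivalence and input-side generalization with weaker preconditions, $\mathcal{S}3_r$ activates whenever $\mathcal{S}2_i$ did and produces the same required outputs, yielding $\sigma_3$.

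The main obstacle will be the activation-preservation argument at the dominance step, because replacing $\mathcal{S}1_i$ by the more general $\mathcal{S}2_j$ changes the output set produced at that node of the dependency graph — $\mathcal{S}2_j$ may emit either extra outputs or more specific sub-concept outputs. I expect to have to verify carefully, against the definition of service activation, that (i) each downstream service that originally consumed an output $o$ of $\mathcal{S}1_i$ still finds some $o' \in \mathcal{S}2_j^{op}$ with $f(o') \sqsubseteq f(o)$, and (ii) extraneous outputs do not interfere because the activation rule is monotone in the available output pool. A symmetric check for postcondition implication is needed so that dependents whose preconditions were discharged by $\mathcal{PO}1_i$ are still discharged by $\mathcal{PO}2_j$. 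The Level-1 and Level-3 steps are structurally similar but lighter, since equivalence and IIOE are tighter relations than dominance.
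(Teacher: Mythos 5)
Your proposal is correct and follows essentially the same route as the paper: the heart of the argument is the level-by-level preservation of service activation (equivalence classes, then dominance groups, then IIOE trees), which is exactly the paper's three-case analysis. You are in fact more explicit than the paper's own proof, which argues only the direction from an original solution to an abstract one and leaves the converse to the reconstruction/soundness discussion; the one small imprecision is in your unfolding step, where a third-level service should be replaced specifically by the root of its IIOE tree (whose input/output/pre/postcondition signature it shares), since an arbitrary non-root member may have more specific inputs or stronger preconditions and thus fail to activate.
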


\begin{proof}
 We first consider the following cases:
 \begin{itemize}
  \item If any service ${\mathcal{S}}_i$ is eventually activated from the query inputs and input specification, the abstract service ${\mathcal{S}}1_j$ containing ${\mathcal{S}}_i$ must be activated, since both of them are functionally equivalent and will produce the set of outputs and identical postconditions, that further producing the query outputs eventually with desired output requirements, if there exists a solution to the query.
  \item If any service ${\mathcal{S}}1_i$ is eventually activated from the query inputs and input specification, the abstract service ${\mathcal{S}}2_j$ containing ${\mathcal{S}}1_i$ must be activated, since the representative service of ${\mathcal{S}}2_j$ either  ${\mathcal{S}}1_i$ itself or dominates ${\mathcal{S}}1_i$. Therefore, ${\mathcal{S}}2_j$ will produce the set of outputs containing the output set of ${\mathcal{S}}1_i$ having stronger postconditions, that further producing the query outputs eventually with desired output requirements, if there exists a solution to the query.
  \item For each service ${\mathcal{S}}2_i$, there exist a third level abstract service ${\mathcal{S}}3_i$ having identical set of inputs, outputs, preconditions and postconditions. Therefore, the same argument holds for this case as well. 
 \end{itemize}
 Therefore, if there exists a solution to a query in the original service space, we can always construct a solution in terms of the functional dependencies by composing any level of abstract services. 
\end{proof}

Our entire framework provides the following guarantee.

\begin{lemma}\label{complete}
 Our framework is always able to find a valid solution, if one exists. 
 \hfill$\blacksquare$
\end{lemma}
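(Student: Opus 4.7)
The plan is to argue by case analysis on whether the query has a feasible solution in the original service space, and then to trace what happens at each of the three abstraction levels together with their refinement fall-backs.

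First I would invoke the previous lemma: if a solution exists in the original space, then a solution in terms of functional dependencies can always be constructed at every abstraction level. This guarantees that the composition algorithm, when run on any abstract space, will return at least one candidate solution (so the outer procedure never fails for functional reasons). Combined with Lemma~\ref{soundAbstraction}, any candidate that already satisfies all QoS constraints is immediately a valid solution on the original service space, and we are done.

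Next I would handle the remaining case, where the candidate returned from the abstract composition violates one or more QoS constraints. Here the proof must walk through the refinement chain. For partial refinement, I would argue that the structural (functional) solution is preserved while only the representative services are reassigned; so if some choice of representatives in the underlying equivalence classes satisfies the constraints, the reweighting step of Steps 1--7 gives a concrete mechanism to reach it (and if no such choice exists, the check in Step 4 correctly reports infeasibility \emph{within that functional solution}, not within the whole repository). The key observation is that partial refinement never discards the option to escalate. If it fails, the framework is defined to invoke complete refinement, which by construction reverts every abstract service to its original members and re-runs the underlying composition algorithm on the un-abstracted service space. Since that algorithm is assumed to be sound and complete on the original space, it returns a valid solution whenever one exists. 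Putting the three stages together, no existing solution can be missed: either the abstract run finds one (Case~1), or partial refinement repairs the QoS of an existing functional skeleton (Case~2), or complete refinement reduces the problem to the original composition task (Case~3).

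The hard part, I expect, is the middle case: arguing cleanly that \emph{failing} partial refinement is safe, i.e. that invoking complete refinement after partial refinement fails cannot cause the framework to overlook a feasible solution that partial refinement happened not to surface. This reduces to observing that complete refinement's behaviour is independent of the outcome of partial refinement (it ignores the abstract candidate entirely and restarts on the full repository), so the completeness of the overall framework rests on the completeness of the base composition algorithm on the original service set, which is the standing assumption.
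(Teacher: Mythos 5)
Your proposal is correct and rests on the same key idea as the paper's own (very terse) justification: if any abstraction level fails to yield a constraint-satisfying solution, complete refinement lets the framework fall back level by level until, in the worst case, it runs the base composition algorithm on the original service space, whose completeness is the standing assumption. Your version merely fleshes this out with the two preceding lemmas (functional constructibility at every level and soundness of reverted solutions) and an explicit argument that failing partial refinement is harmless, which the paper leaves implicit.
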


\noindent
If no solution is found from an abstraction level, we can always go back to the previous level by applying the complete refinement and search for the solution. In the worst case, we may end up searching for a solution in the original service space. Therefore, we never miss a solution, if one exists.

\section{Experimental Results}\label{sec:result}
\noindent
Our proposed algorithms were implemented in Java. We implemented our framework on top of two composition algorithms \cite{DBLPChattopadhyayB16} and \cite{xia2013web}. We first applied our framework on the WSC-2008 datasets \cite{bansal2008wsc}. Since the services in the WSC dataset are generated complete randomly \cite{wagner2011qos}, we did not find significant reduction in any abstraction step. For the $2^{nd}$ dataset, we found 1 pair of equivalent services, for the $7^{th}$ dataset 1 pair of equivalent services and 1 pair of dominant services and finally for the $8^{th}$ dataset, we found 2 pairs of dominant services. For rest of the datasets, we could not find any reduction due to abstraction. Hence, our framework could not provide significant performance gain for WSC-2008 datasets. Therefore, we demonstrate our method on a synthetically generated dataset. 

Our synthetic dataset is an extension of our example dataset as shown in Table \ref{tab:ourData}. Our example dataset is constructed from the dataset described in \cite{wagner2011qos}. We now discuss the construction of our synthetic dataset. We first generated a random number of concepts and assigned a random relationship (identical / sub / super / unrelated) between the concepts. We then randomly generated the number of input-output parameters and randomly associated them with the concepts. However, the services were also generated in a semi random manner. We increased the number of services in a random manner maintaining the relationship (equivalence / dominance / IIOE / unrelated) between the services as discussed above and accordingly assigned the input-output parameters, preconditions and postconditions. Finally, we generated the QoS parameters of the services. In this paper, we considered 4 QoS parameters: response time, throughput, reliability, availability and modeled them assuming they follow normal distribution \cite{c1}. Once the service repository was created, we generated a random query having random inputs, outputs, input-specifications, output-requirements and QoS constraints. We now analyze the results.

\begin{table*}[htb]\centering
\caption{{Computation Time (ms) and speed up for dependency graph construction}}
\label{tab:time1}
  \begin{tabular}{c||c|c|c|c||c|c|c}
  DataSet & Abs. 0 & Abs. 1 & Abs. 2 & Abs. 3 & $SU_1$ & $SU_2$ & $SU_3$\\
  \hline
  RDS1 & 731 & 70 & 56 & 28 & 10.44 & 1.25 & 2\\ 
  RDS2 & 3493 & 109 & 89 & 19 & 32.05 & 1.22 & 4.68\\ 
  RDS3 & 6159 & 380 & 221 & 154 & 16.21 & 1.72 & 1.44\\ 
  RDS4 & 17930 & 893 & 532 & 334 & 20.08 & 1.68 & 1.59\\ 
  RDS5 & 15904 & 874 & 421 & 343 & 18.2 & 2.08 & 1.23\\ 
   \end{tabular}
\end{table*}
\begin{table*}[htb]\centering
 \caption{Computation Time (ns) and speed up for optimal RT computation}\label{tab:time2}
  \begin{tabular}{c||c|c|c|c||c|c|c}
  DataSet & Abs. 0 & Abs. 1 & Abs. 2 & Abs. 3 & $SU_1$ & $SU_2$ & $SU_3$\\
  \hline
  RDS1 & 37523858 & 2111240 & 1921228 & 884105 & 17.77 & 1.1 & 2.17\\  
  RDS2 & 348775813 & 6014884 & 5817691 & 1079896 & 57.99 & 1.03 & 5.39\\ 
  RDS3 & 192147284 & 3180854 & 3021811 & 2012347 & 60.41 & 1.05 & 1.50\\
  RDS4 & 389731251 & 17119550 & 15407595 & 3599254 & 22.77 & 1.11 & 4.28\\  
  RDS5 & 209139820 & 4896525 & 4651698 & 3370275 & 42.71 & 1.05 & 1.38\\
   \end{tabular}
\end{table*}
\begin{table*}[htb]\centering
 \caption{Computation Time (ns) and speed up for optimal throughput computation}\label{tab:time3}
  \begin{tabular}{c||c|c|c|c||c|c|c}
  DataSet & Abs. 0 & Abs. 1 & Abs. 2 & Abs. 3 & $SU_1$ & $SU_2$ & $SU_3$\\
  \hline
  RDS1 & 37098104 & 2127783 & 1978838 & 1027102 & 17.44 & 1.075 & 1.93\\ 
  RDS2 & 425637217 & 3029113 & 2998821 & 1040155 & 140.52 & 1.01 & 2.88\\ 
  RDS3 & 281199735 & 3893887 & 3777070 & 2154407 & 72.22 & 1.03 & 1.75\\ 
  RDS4 & 38649038 & 7178013 & 7106232 & 2382924 & 5.38 & 1.01 & 2.98\\  
  RDS5 & 264711653 & 3455560 & 3386448 & 2291248 & 76.60 & 1.02 & 1.48\\ 
   \end{tabular}
\end{table*}
\begin{table*}[htb]\centering
 \caption{Computation Time (ms) for constraint satisfying solution computation}\label{tab:result}
  \begin{tabular}{c||c|c||c|c||c|c|c|c}
  \multirow{2}{*}{Level} & \multicolumn{2}{c||}{Exp. 1} & \multicolumn{2}{c||}{Exp. 2} & \multicolumn{4}{c}{Exp. 3}\\
   \cline{2-9}
   & \# Services & Time & \# Services & Time & \# Services & Time & Time & Time\\ \hline
   Abs. 0 & 3846 & 29862 & 9718 & 25896 & 98361 & - & - & -\\
   Abs. 1 & 1297 & 11596 & 6389 & 13791 & 53846 &  - & - & -\\
   Abs. 2 & 1123 & 3276 & 5231 & 11563 & 15926 &  3695 & - & -\\
   Abs. 3 & 201  & 9698 & 1226 & 2215 & 1856 & 1532 & 7197 & 23458\\
   \end{tabular}
\end{table*}

We present our experiment in two parts. In the first part, we show the performance gain achieved by our framework at runtime, when implemented on top of \cite{xia2013web}. Figure \ref{fig:analysis1} shows the reduction in the number of services in the service repository by applying each abstraction level for 5 random datasets ($RDS1$ to $RDS5$).
Figure \ref{fig:analysis2} shows the reduction in the number of services in the dependency graph constructed to respond to a query by applying each abstraction level for 5 random datasets. On an average, there are 6x, 10x and 16x reductions in the number of services in the dependency graph for abstraction levels 1, 2 and 3 respectively with respect to the number of unabstracted services. Table \ref{tab:time1} shows the computation time to construct the dependency graph generated in response to the query for each of the 3 abstractions. Columns $2$ to $5$ of the table show the computation time required by the underlying algorithm for 4 levels of services (the $0^{th}$ level corresponds to without abstraction). Columns $6$ to $8$ of the table show the speed up ($SU_i$) for $i^{th}$ ($i = 1, 2, 3$) abstraction level respectively with respect to the un-abstracted service space (i.e., Level 0). As evident from the table, we achieved, on an average, 19x, 30x, 70x  speed up for abstraction levels 1, 2 and 3 respectively. Figure \ref{fig:analysisRT} shows the response time (RT) obtained by Algorithm \cite{xia2013web} for 4 levels of services. The response time obtained by our framework, on an average, degrades 4x, 5x, 6x for  abstraction levels 1, 2 and 3 respectively, while the computation speed up achieved by our framework to compute response time is on an average 40x, 43x, 126x (shown in Table \ref{tab:time2}). Similarly, the throughput obtained by our framework, on an average, degrades 1.28x, 1.33x, 1.36x respectively for abstraction levels 1, 2 and 3 (shown in Figure \ref{fig:analysisTR}), while the speed up achieved by our framework to compute throughput is on an average 62x, 64x, 142x (as in Table \ref{tab:time3}).


\begin{figure}
 \centering
 \includegraphics[width=\linewidth]{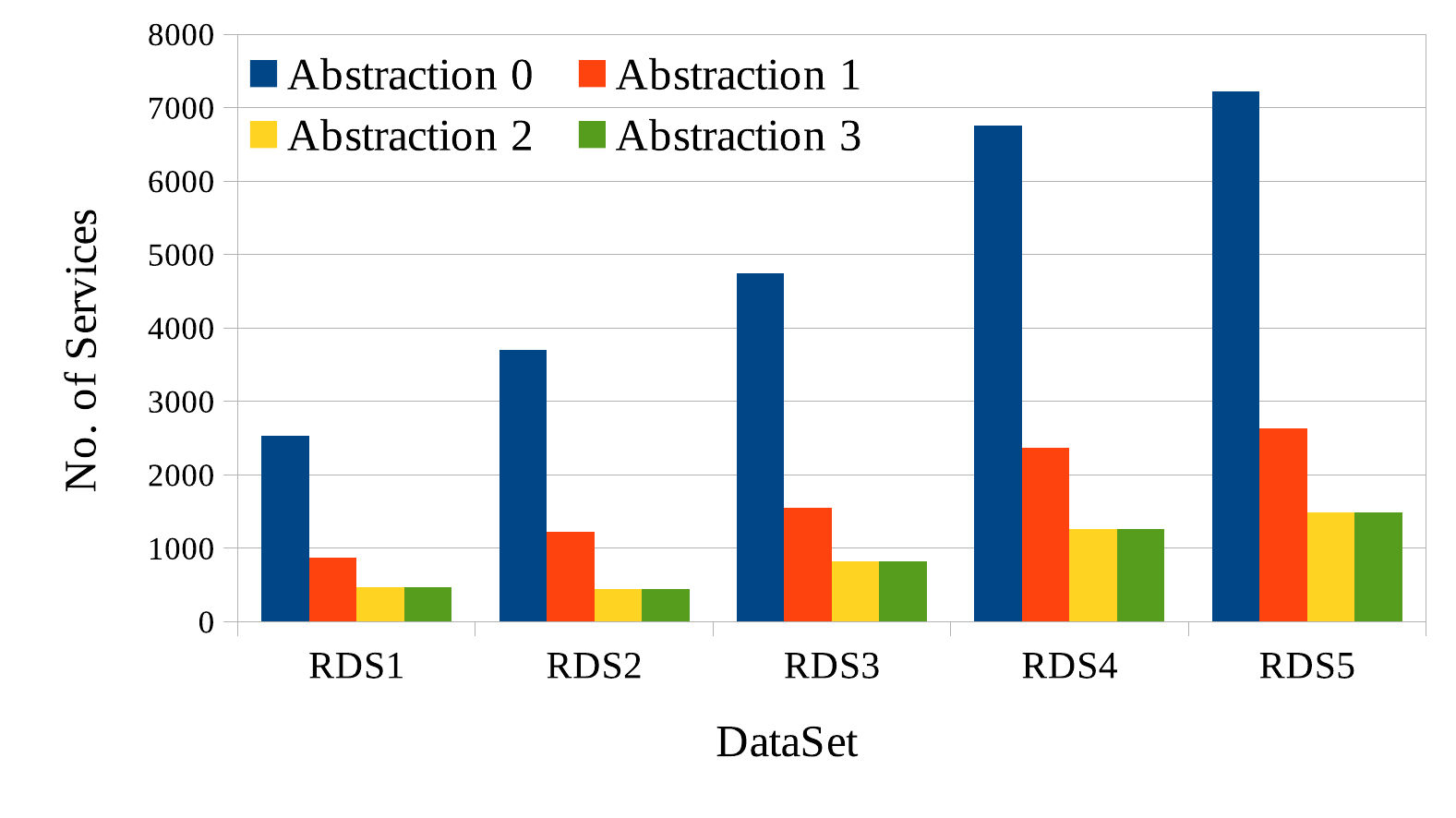}
 \caption{{No. of services in service repository}}
\label{fig:analysis1}
\end{figure}

\begin{figure}
 \centering
 \includegraphics[width=\linewidth]{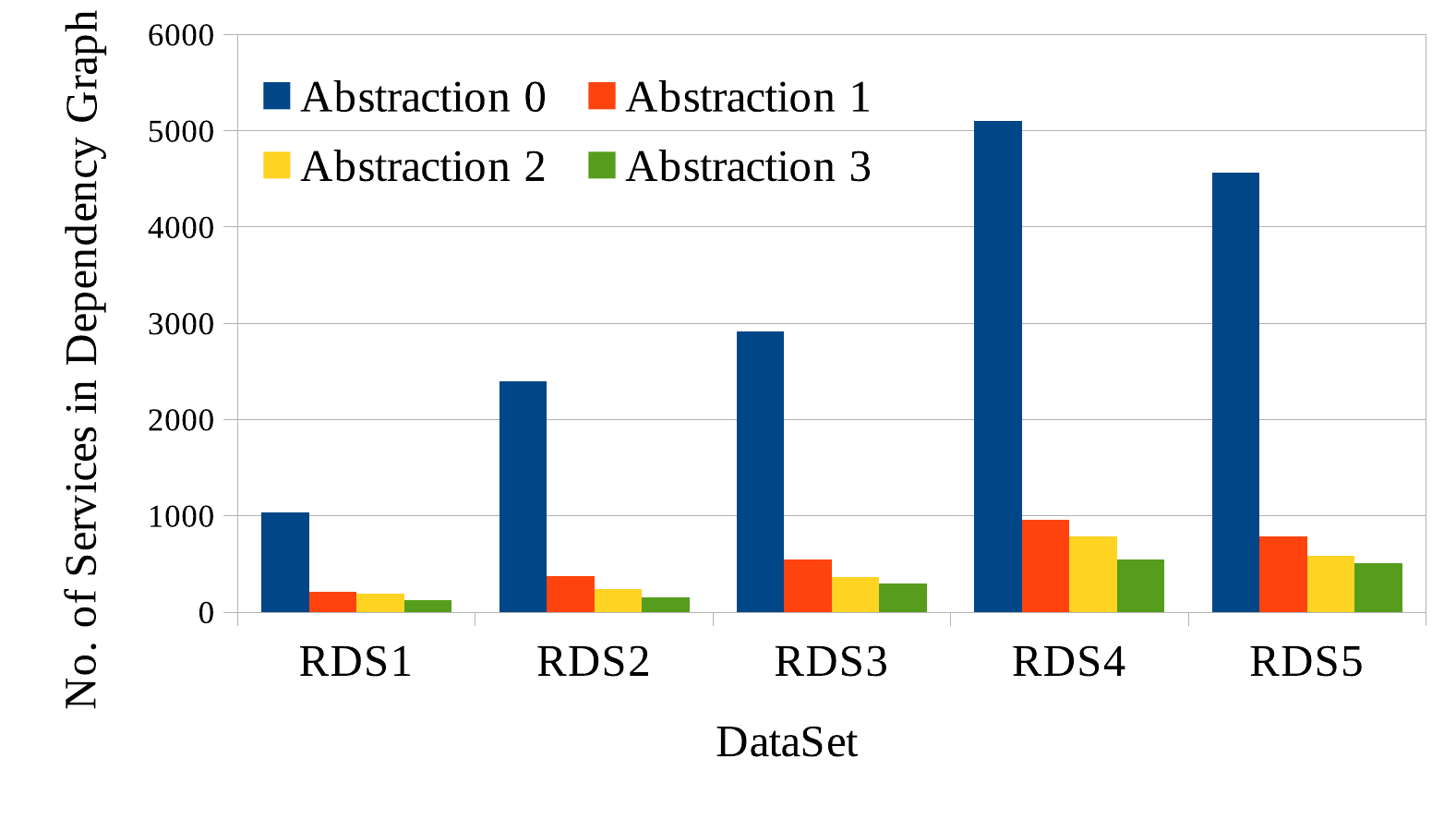}
 \caption{{No. of services in dependency graph}}
 \label{fig:analysis2}
\end{figure}


 \begin{figure}
 \centering
 \includegraphics[width=\linewidth]{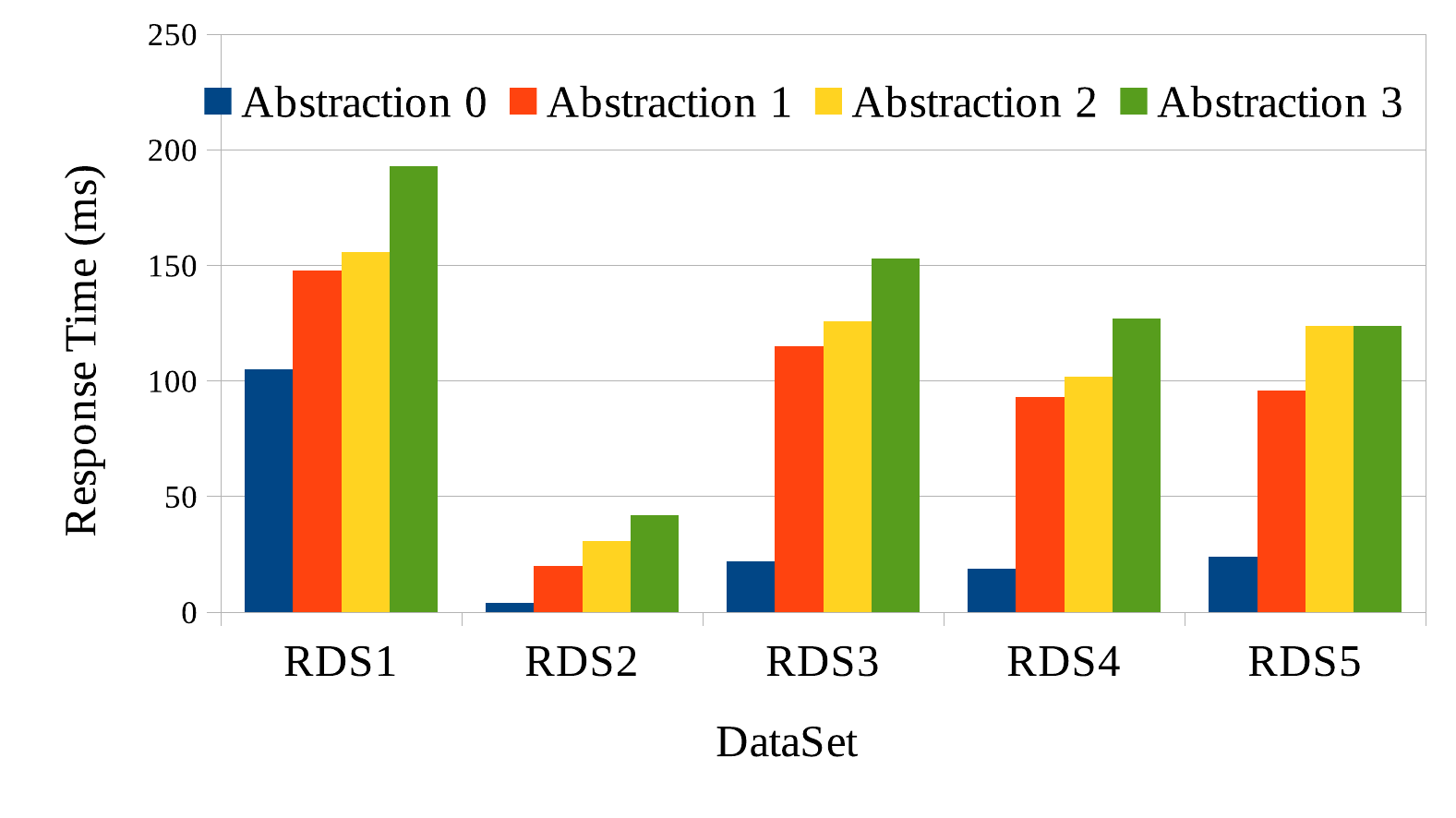}
 \caption{{Response Time}}
 \label{fig:analysisRT}
 \end{figure}
 
 \begin{figure}
 \centering
 \includegraphics[width=\linewidth]{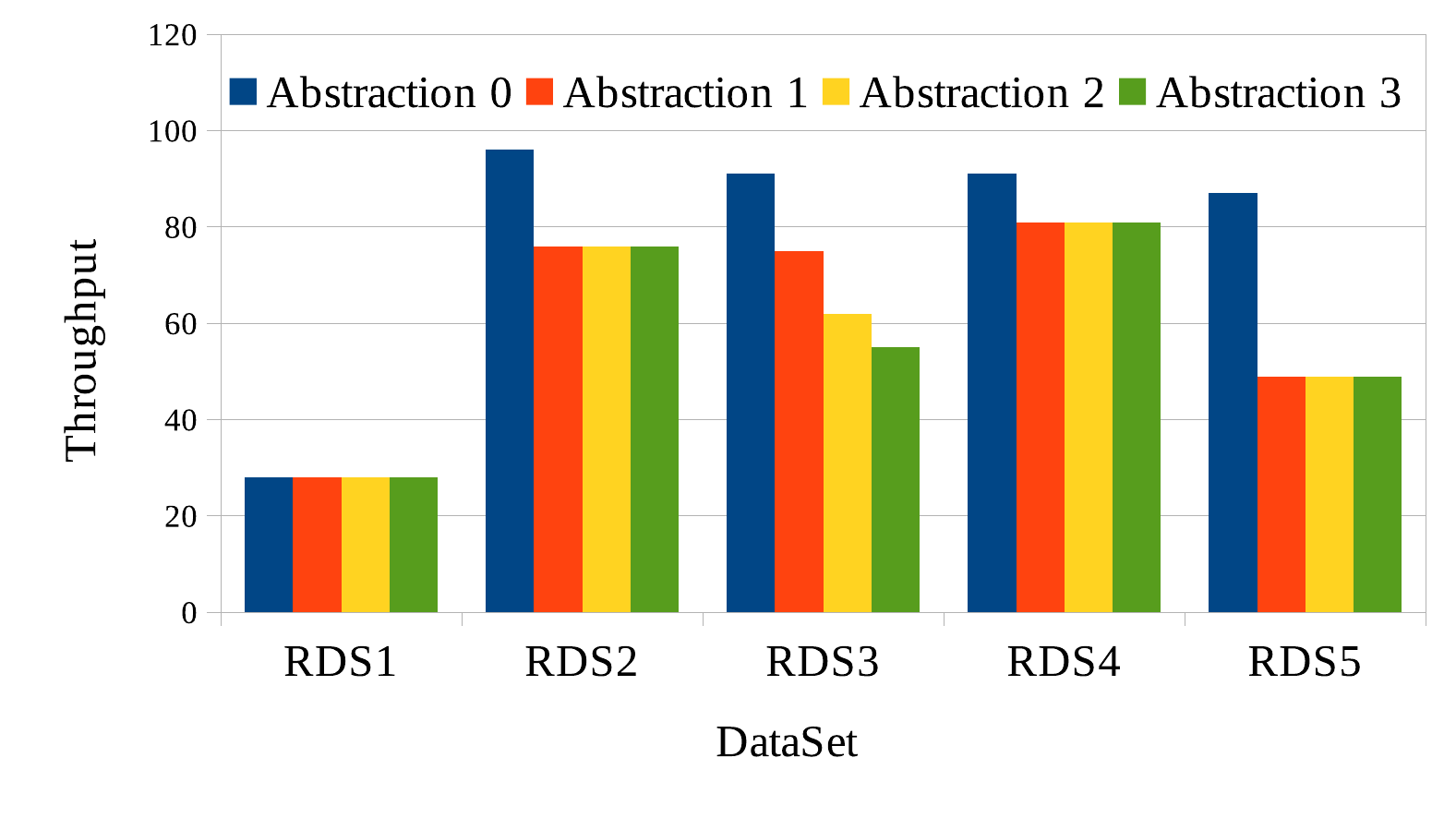}
 \caption{{Throughput}}
 \label{fig:analysisTR}
 \end{figure}

In the second part of our experiment, we show the performance of our framework implemented on top of \cite{DBLPChattopadhyayB16}.
Here, we considered the query having QoS constraints. We conducted three sets of experiments (Exp. 1, Exp. 2 and Exp. 3 of Table \ref{tab:result}) varying the number of services and query and recorded the computation time to serve a query. In the final experiment (Exp. 3 of Table \ref{tab:result}), we varied the constraints of the QoS parameters ($9^{th}$ Column has tighter constraints than $8^{th}$, which has tighter constraints than $7^{th}$) for the same query and recorded the result. As can be seen from the table, in all 3 experiments, as the abstraction level increases, the number of services either remains same or decreases, which we have claimed in this paper. However, it is not guaranteed that the computation time to find a solution always improves as the abstraction level increases, because in some cases, complete refinement is necessary (as can be seen in the first experiment). In the first experiment, we have shown as the abstraction level increases, the time requirement to compute a solution decreases upto the { $2^{nd}$} level of abstraction. However, in the { $3^{rd}$} abstraction level, the time requirement increases. It is due to the fact that in the { $3^{rd}$} level of abstraction no solution could be found. Therefore, complete refinement was performed. In the last experiment, ``-" indicates no solution is generated due to time out. It may be noted, no solution was generated in the lower level of abstraction as well. However, at the higher level, we obtained a solution. It may be observed, on an average, the composition algorithm in the abstract service space outperforms the composition algorithm in the un-abstract service space.


\section{Related Work}\label{related}
\noindent
Research on service composition has been carried out in multiple directions. A significant amount of work considering semantic web service composition has been performed considering optimality being the primary concern (e.g., graph based approaches \cite{chen2012redundant,rodriguez2015hybrid}, ILP based approaches \cite{DBLPChattopadhyayB16,schuller2012cost}, AI planning based approaches \cite{oh2008effective,mostafa2015multi}). However, in real time,  optimality has proved to be an expensive requirement \cite{lecue2009towards,chattopadhyay2015scalable} for service composition, since most of the optimal methods do not scale for large service repositories. Therefore, a number of heuristic methods \cite{wuqos,6009375,guidara2015heuristic,el2010tqos} have been proposed, where computation time is considered as the main objective.
In \cite{Alrifai2010} and \cite{alrifai2009combining}, authors have proposed a multi constrained QoS aware service composition approach, instead of finding the optimal solutions. Dynamic binding is the main concern of \cite{alrifai2009combining}, where authors propose to generate the skyline services for each task and cluster the services using the K-means algorithm. In \cite{Alrifai2010}, authors proposed an Integer Linear Programming (ILP) based approach, where ILP is used to divide the global constraints into a set of local constraints and then using the local constraints, the service selection is done for each task. In \cite{schuller2012cost}, authors proposed ILP based methods to solve multi-constrained service composition. A significant amount of work has been done based on evolutionary  algorithms, such as Particle Swarm Optimization \cite{liao2013multi}, Ant Colony Optimization \cite{shanshan2012improved}, Bee Colony Optimization \cite{liu2014parameter}, Genetic Algorithms \cite{zhang2013genetic,wang2008optimal}, NSGA2 \cite{hashmi2013automated,wagner2012multi}. 
Though these algorithms can rapidly generate solutions and handle large and complex service spaces \cite{lecue2009towards}, they compromise on the solution quality \cite{song2011workflow,pistore2005automated}. In \cite{yan2015anytime}, the authors proposed a planning graph based approach and an anytime algorithm \cite{yan2015anytime}
that attempts to maximize the utility. In \cite{Zeng:2003:QDW:775152.775211}, authors proposed an ILP based method 
to maximize the utility. 
In \cite{DBLPChattopadhyayB16}, the authors proposed ILP based multi-constrained service composition.
In \cite{cao2007service}, the authors analyze the relation between multi-objective service composition and the
Multi-choice, Multi-dimension 0-1 Knapsack Problem (MMKP).
There are few methods \cite{lecue2009towards,chattopadhyay2015scalable} in literature, that enlighten the issue of search space reduction of the composition algorithm. However, they often fail to generate a solution with desired quality. 

In contrast to existing literature, we propose a framework for semantic web service composition based on abstraction refinement that aims to expedite the solution construction time by reducing the search space without compromising on the solution quality. Service clustering based on syntactic similarity in terms of input-output parameters has been dealt with in our earlier work \cite{7933195}. However, the notion of semantics of a web service has not been dealt with in that work. In this paper, we bring the notion of semantics and propose a methodology that provides a scalable way of pruning the composition search space. Our method can be applied on top of any service composition algorithm to improve its performance. To conclude, we do not propose any new service composition algorithm, rather a framework on top of existing algorithms. This adds a unique novelty to this work.

\section{Conclusion}
\label{sec:conclusion}
\noindent
This paper presents an  approach for reducing the search space for semantic web service composition based on abstraction refinement. For a large dataset, the approach can be quite efficient, since it greatly improves 
performance. As future work, we are currently working on designing more sophisticated refinement techniques. We also aim to perform more extensive experiments of this framework on real datasets. 

\scriptsize
\bibliographystyle{IEEEtran}
\bibliography{ref}
\end{document}